\newtheorem{definition}{Definition}
\newtheorem{theorem}{Theorem}[section]
\newtheorem{obs}{Observation}[theorem]
\title{$f$-IRL: Inverse Reinforcement Learning \\via State Marginal Matching}
\author{
  Tianwei Ni\thanks{Equal contribution, orders determined by dice rolling.\; $^{\dagger}$Equal advising.}, ~ Harshit Sikchi\footnotemark[1],  ~Yufei Wang\footnotemark[1], ~Tejus Gupta\footnotemark[1], ~Lisa Lee\footnotemark[2], ~Benjamin Eysenbach\footnotemark[2] \\
  Carnegie Mellon University \\
  \texttt{\{tianwein, hsikchi, yufeiw2, tejusg, lslee, beysenba\}@cs.cmu.edu} \\
}
\begin{document}
\setlength\intextsep{0pt}
\setlength{\textfloatsep}{0.15cm}
\addtolength{\parskip}{-0.5mm}
\maketitle
\newcommand{\yf}[1]{\textcolor{purple}{yf: #1}}
\newcommand{\tw}[1]{\textcolor{blue}{TW: #1}}
\newcommand{\hs}[1]{\textcolor{red}{hs: #1}}
\newcommand{\E}[2]{\mathbb{E}_{#1}{\left[#2\right]}}

\newcommand{\x}{\mathbf{x}}
\renewcommand{\s}{\mathbf{s}}
\renewcommand{\a}{\mathbf{a}}
\renewcommand{\o}{\mathcal{O}}
\newcommand{\R}{\mathbb{R}}

\newcommand{\sbr}[1]{\left[#1\right]}
\newcommand*{\argmax}{\mathop{\mathrm{argmax}}}
\newcommand{\defeq}{\mathrel{\mathop:}=}
\newcommand{\question}{\stackrel{?}{=}}
\newcommand{\be}{\begin{equation}}
\newcommand{\ee}{\end{equation}}

\newcommand{\rparam}{\theta}
\newcommand{\mypartial}[2]{\frac{\partial#1}{\partial#2}}
\newcommand{\grad}{\nabla}

\newcommand{\fdiv}[2]{D_f(#1\ ||\ #2)}
\newcommand{\kl}[2]{D_{\mathrm{KL}}(#1\ ||\ #2)}
\newcommand{\supp}[1]{\mathrm{supp}(#1)}

\definecolor{lightgreen}{HTML}{90EE90}
\newcommand{\boxedeq}[2]{\begin{empheq}[box={\fboxsep=6pt\fbox}]{align}\label{#1}#2\end{empheq}}

\begin{abstract}
\setcounter{footnote}{0}
Imitation learning is well-suited for robotic tasks where it is difficult to directly program the behavior or specify a cost for optimal control. 
In this work, we propose a method for learning the reward function (and the corresponding policy) to match the expert state density. 
Our main result is the analytic gradient of any $f$-divergence between the agent and expert state distribution w.r.t. reward parameters. Based on the derived gradient, we present an algorithm, $f$-IRL, that recovers a stationary reward function from the expert density by gradient descent. 
We show that $f$-IRL can learn behaviors from a hand-designed target state density \textit{or} implicitly through expert observations. 
Our method outperforms adversarial imitation learning methods in terms of sample efficiency and the required number of expert trajectories on IRL benchmarks. 
Moreover, we show that the recovered reward function can be used to quickly solve downstream tasks, and empirically demonstrate its utility on hard-to-explore tasks and for behavior transfer across changes in dynamics.\footnote{ Project videos and code link are available at 
\url{https://sites.google.com/view/f-irl/home}.
}

\end{abstract}

\keywords{Inverse Reinforcement Learning, Imitation Learning} 

\newif\ifrss
\rssfalse

\section{Introduction}

Imitation learning (IL) is a powerful tool to design autonomous behaviors in robotic systems. Although reinforcement learning methods promise to learn such behaviors automatically, they have been most successful in tasks with a clear definition of the reward function. Reward design remains difficult in many robotic tasks such as driving a car \cite{pomerleau1989alvinn}, tying a knot \cite{osa2017online}, and human-robot cooperation \cite{hadfield2016cooperative}. Imitation learning is a popular approach to such tasks, since it is easier for an expert teacher to demonstrate the desired behavior rather than specify the reward \cite{atkeson1997robot, henderson2018deep, hwangbo2019learning}.

Methods in IL frameworks are generally split into behavior cloning (BC) \cite{bain1995framework} and inverse reinforcement learning (IRL) \cite{russell1998learning, ng2000algorithms,abbeel2004apprenticeship}.
BC is typically based on supervised learning to regress expert actions from expert observations without the need for further interaction with the environment, but suffers from the covariate shift problem \cite{ross2010efficient}. 
On the other hand, IRL methods aim to learn the reward function from expert demonstrations, and use it to train the agent policy. Within IRL, adversarial imitation learning (AIL) methods (GAIL~\cite{ho2016generative}, AIRL~\cite{fu2017learning}, $f$-MAX~\cite{ghasemipour2019divergence}, SMM~\cite{lee2019efficient}) train a discriminator to guide the policy to match the expert's state-action distribution.

AIL methods learn a \emph{non-stationary} reward by iteratively training a discriminator and taking a single policy update step using the reward derived from the discriminator. After convergence, the learned AIL reward cannot be used for training a new policy from scratch, and is thus discarded. In contrast, IRL methods such as ours learn a \textbf{stationary reward} such that, if the policy is trained from scratch using the reward function until convergence, then the policy will match the expert behavior. We argue that learning a stationary reward function can be useful for solving downstream tasks and transferring behavior across different dynamics.

Traditionally, IL methods assume access to expert demonstrations and minimize some divergence between policy and expert's trajectory distribution. However, in many cases, it may be easier to directly specify the state distribution of the desired behavior rather than to provide fully-specified demonstrations of the desired behavior~\cite{lee2019efficient}. For example, in a safety-critical application, it may be easier to specify that the expert never visits some unsafe states, instead of tweaking reward to penalize safety violations~\citep{dalal2018safe}. 
Similarly, we can specify a uniform density over the whole state space for exploration tasks, or a Gaussian centered at the goal for goal-reaching tasks. Reverse KL instantiation for $f$-divergence in $f$-IRL allows for unnormalized density specification, which further allows for easier preference encoding.

In this paper, we propose a new method, $f$-IRL, that learns a stationary reward function from the expert density via gradient descent. To do so, we derive an analytic gradient of any arbitrary $f$-divergence between the agent and the expert state distribution w.r.t. reward parameters. We demonstrate that $f$-IRL is especially useful in the limited data regime, exhibiting better sample efficiency than prior work in terms of the number of environment interactions and expert trajectories required to learn the MuJoCo benchmark tasks. 
We also demonstrate that the reward functions recovered by $f$-IRL can accelerate the learning of hard-to-explore tasks with sparse rewards, and these same reward functions can be used to transfer behaviors across changes in dynamics.

\begin{table*}[t]
    \centering
    \footnotesize
    \begin{tabular}{ ccccc } 
    \toprule
    \textbf{IL Method} & \textbf{Space} & \textbf{$f$-Divergence} & \textbf{Recover Reward?} \\
    \midrule
    MaxEntIRL~\cite{ziebart2008maximum}, GCL~\cite{finn2016guided} & $\tau$ & Forward Kullback-Leibler & \checkmark  \\ 
    GAN-GCL~\cite{finn2016connection} & $\tau$ & Forward Kullback-Leibler\textsuperscript{*}   & \checkmark   \\
    AIRL~\cite{fu2017learning}, EAIRL~\cite{qureshi2018adversarial} & $\tau$ & Forward Kullback-Leibler\textsuperscript{*}  & \checkmark \\
    EBIL~\cite{liu2020energy} & $\tau$ & Reverse Kullback-Leibler & \checkmark \\
    GAIL~\cite{ho2016generative} & $s,a$ & Jensen-Shannon & $\times$  \\
    $f$-MAX~\cite{ghasemipour2019divergence} & $s,a$ & $f$-divergence & $\times$  \\
    SMM~\cite{lee2019efficient} & $s$ & Reverse Kullback-Leibler & $\times$   \\
    $f$-IRL (Our method) & $s$ & $f$-divergence & \checkmark \\
    \bottomrule
    \end{tabular}
    \vspace{1mm}
    \caption{IL methods vary in the domain of the expert distribution that they model (``space''), the choice of f-divergence, and whether they recover a stationary reward function.
    \textsuperscript{*}GAN-GCL and AIRL use biased IS weights to approximate FKL (see Appendix~\ref{sec:critique}).}
    \label{tab:works}
\end{table*}
\section{Related Work}
\label{sec:background}
IRL methods~\cite{russell1998learning, ng2000algorithms, abbeel2004apprenticeship} obtain a policy by learning a reward function from sampled trajectories of an expert policy. MaxEntIRL~\citep{ziebart2008maximum} learns a stationary reward by maximizing the likelihood of expert trajectories, i.e., it minimizes forward KL divergence in trajectory space under the maximum entropy RL framework. Similar to MaxEntIRL, Deep MaxEntIRL~\cite{wulfmeier2015maximum} and GCL~\cite{finn2016guided} optimize the forward KL divergence in trajectory space. 
A recent work, EBIL~\cite{liu2020energy}, optimizes the reverse KL divergence in the trajectory space by treating the expert state-action marginal as an energy-based model.
Another recent method, RED~\cite{wang2019random}, uses support estimation on the expert data to extract a fixed reward, instead of trying to minimize a $f$-divergence between the agent and expert distribution.

One branch of IRL methods train a GAN~\cite{goodfellow2014generative} with a special structure in the discriminator to learn the reward. This is first justified by \citet{finn2016connection} to connect GCL~\cite{finn2016guided} with GAN, and several methods ~\citep{finn2016connection,fu2017learning,qureshi2018adversarial} follow this direction. 
Our analysis in Appendix~\ref{sec:critique} suggests that the importance-sampling weights used in these prior methods may be biased. We show that AIRL does not minimize the reverse RL in state-marginal space (as argued by~\citep{ghasemipour2019divergence}). Moreover, AIRL~\cite{fu2017learning} uses expert state-action-next state transitions, while our method can work in a setting where only expert states are provided.

A set of IL methods~\cite{ho2016generative,ghasemipour2019divergence} 
use a discriminator to address the issue of running RL in the \textit{inner} loop as classical IRL methods. Instead, these methods directly optimize the policy in the \textit{outer} loop using adversarial training. These methods can be shown to optimize the Jensen-Shannon, and a general $f$-divergence respectively, but do not learn a reward function. SMM \cite{lee2019efficient} optimizes the reverse KL divergence between the expert and policy state marginals but also does not recover a reward function due to its fictitious play approach. SQIL~\cite{reddy2019sqil} and DRIL~\cite{brantley2019disagreement} utilize regularized behavior cloning for imitation without recovering a reward function.
Unlike these prior methods, $f$-IRL can optimize any $f$-divergence between the state-marginal of the expert and the agent, while also recovering a stationary reward function. Table~\ref{tab:works} summarizes the comparison among imitation learning methods.

\section{Preliminaries}
\label{sec:prelim}

In this section, we review notation on maximum entropy (MaxEnt) RL~\cite{levine2018reinforcement} and state marginal matching (SMM)~\cite{lee2019efficient} that we build upon in this work.

\textbf{MaxEnt RL}.\; Consider a Markov Decision Process (MDP) represented as a tuple $(\mathcal{S}, \mathcal{A}, \mathcal{P}, r, \rho_0, T)$ with state-space $\mathcal{S}$, action-space $\mathcal{A}$, dynamics $\mathcal{P}:\mathcal{S} \times \mathcal{A} \times \mathcal{S} \rightarrow [0,1]$, reward function $r(s,a)$, initial state distribution $\rho_0$, and horizon $T$.
The optimal policy $\pi$ under the maximum entropy framework~\cite{ziebart2010modeling} maximizes the objective $\sum_{t=1}^T \E{\rho_{\pi,t}(s_t,a_t)}{r(s_t,a_t) + \alpha H(\cdot|s_t)}$. Here $\rho_{\pi,t}$ is the state-action marginal distribution of policy $\pi$ at timestamp $t$, and $\alpha > 0$ is the entropy temperature.

Let $r_\rparam(s)$ be a parameterized differentiable reward function only dependent on state. Let trajectory $\tau$ be a time series of visited states $\tau=(s_0, s_1,\dots,s_T)$. The optimal MaxEnt trajectory distribution $\rho_\rparam(\tau)$ under reward $r_\rparam$ can be computed as $\rho_\rparam(\tau) = \frac1Z p(\tau)e^{r_\rparam(\tau)/\alpha}$, where
$$p(\tau)=\rho_0(s_0)\prod_{t=0}^{T-1} p(s_{t+1}|s_t, a_t)\;,\quad r_{\rparam}(\tau)=\sum_{t=1}^T r_{\rparam}(s_t),\quad Z = \int p(\tau) e^{r_{\rparam}(\tau)/\alpha}d\tau.
$$

Slightly overloading the notation, the optimal MaxEnt state marginal distribution $\rho_\rparam(s)$ under reward $r_\rparam$ is obtained by marginalization:
\begin{equation}
        \rho_\rparam(s) \propto \int p(\tau) e^{r_\rparam(\tau)/\alpha} \eta_{\tau}(s) d\tau
\end{equation}
where $\eta_{\tau}(s) \triangleq  \sum_{t=1}^T \mathbbm{1}(s_t = s)$ is the visitation count of a state $s$ in a particular trajectory $\tau$. 

\textbf{State Marginal Matching}.\; Given the expert state density $p_E(s)$, one can train a policy to match the expert behavior by minimizing the following $f$-divergence objective:
\begin{equation}
\label{eq:fdiv_ori}
    L_f(\rparam) = \fdiv{\rho_E(s)}{\rho_\rparam(s)}
\end{equation}
where common choices for the $f$-divergence $D_f$~\cite{ali1966general,ghasemipour2019divergence} include forward KL divergence, reverse KL divergence, and Jensen-Shannon divergence. Our proposed $f$-IRL algorithm will compute the analytical gradient of Eq.~\ref{eq:fdiv_ori} w.r.t. $\theta$ and use it to optimize the reward function via gradient descent.

\section{Learning Stationary Rewards via State-Marginal Matching}
\label{sec:method}
In this section, we describe our algorithm $f$-IRL, which takes the expert state density as input, and optimizes the $f$-divergence objective (Eq.~\ref{eq:fdiv_ori}) via gradient descent. Our algorithm trains a policy whose state marginal is close to that of the expert, and a corresponding stationary reward function that would produce the same policy if the policy were trained with MaxEnt RL from scratch.

\subsection{Analytic Gradient for State Marginal Matching in $f$-divergence}

One of our main contributions is the exact gradient of the $f$-divergence objective (Eq. \ref{eq:fdiv_ori}) w.r.t. the reward parameters $\theta$. This gradient will be used by $f$-IRL to optimize Eq.~\ref{eq:fdiv_ori} via gradient descent. The proof is provided in Appendix \ref{app:deriv}.

\begin{theorem}[\textbf{$f$-divergence analytic gradient}] The analytic gradient of the $f$-divergence $L_f(\theta)$ between state marginals of the expert ($\rho_E$) and the soft-optimal agent w.r.t. the reward parameters $\theta$ is given by:
\label{thm:fdiv}
\begin{equation}
\begin{split}
    \label{eq:fdiv}
   \nabla_\theta L_f(\theta)= \frac{1}{\alpha T}\mathrm{cov}_{\tau\sim \rho_{\theta}(\tau)}\left(\sum_{t=1}^{T} h_f\left(\frac{\rho_E(s_t)}{\rho_{\theta}(s_t)}\right),\sum_{t=1}^{T} \nabla_\theta r_\theta(s_t)\right)
\end{split}
\end{equation}
where $h_f(u) \triangleq f(u) - f'(u)u$, $\rho_E(s)$ is the expert state marginal and $\rho_{\theta}(s)$ is the state marginal of the soft-optimal agent under the reward function $r_{\theta}$, and the covariance is taken under the agent's trajectory distribution $\rho_{\theta}(\tau)$.\footnote{Here we assume $f$ is differentiable, which is often the case for common $f$-divergence (e.g. KL divergence).}
\end{theorem}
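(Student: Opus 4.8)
The plan is to differentiate the integral form of the $f$-divergence directly and then re-express the result as an expectation over the agent's trajectory distribution. Writing $L_f(\theta) = \int \rho_\theta(s)\, f\!\left(\rho_E(s)/\rho_\theta(s)\right) ds$, I would apply the product and chain rules to the integrand, with $u(s)=\rho_E(s)/\rho_\theta(s)$ and $\rho_E$ fixed. Differentiating $f$ through the ratio gives $\rho_\theta(s) f'(u)\nabla_\theta u = -f'(u)\,u\,\nabla_\theta\rho_\theta(s)$ (using $\nabla_\theta u = -\tfrac{u}{\rho_\theta(s)}\nabla_\theta\rho_\theta(s)$), and this combines with the term $f(u)\nabla_\theta\rho_\theta(s)$ from differentiating the leading factor to yield exactly $h_f(u)\nabla_\theta\rho_\theta(s)$ with $h_f(u)=f(u)-f'(u)u$. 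Hence $\nabla_\theta L_f(\theta) = \int h_f\!\left(\rho_E(s)/\rho_\theta(s)\right)\nabla_\theta\rho_\theta(s)\,ds$, which isolates all $\theta$-dependence into $\nabla_\theta\rho_\theta(s)$. Interchanging $\nabla_\theta$ with $\int$ here is justified under the standing assumption that $r_\theta$ is differentiable together with the usual domination conditions.

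The second step is to compute $\nabla_\theta\rho_\theta(s)$ using the MaxEnt form from the preliminaries. I would write $\rho_\theta(s) = \tfrac{1}{T}\,\E{\tau\sim\rho_\theta(\tau)}{\eta_\tau(s)}$ with $\rho_\theta(\tau)=\tfrac{1}{Z}p(\tau)e^{r_\theta(\tau)/\alpha}$, and then apply the score-function identity $\nabla_\theta\rho_\theta(s) = \tfrac1T\int \rho_\theta(\tau)\,\nabla_\theta\log\rho_\theta(\tau)\,\eta_\tau(s)\,d\tau$. Since $\log\rho_\theta(\tau) = \log p(\tau) + r_\theta(\tau)/\alpha - \log Z$ and $\nabla_\theta\log Z = \tfrac{1}{\alpha}\E{\tau'\sim\rho_\theta(\tau')}{\nabla_\theta r_\theta(\tau')}$, this gives $\nabla_\theta\log\rho_\theta(\tau) = \tfrac{1}{\alpha}\bigl(\nabla_\theta r_\theta(\tau) - \E{\tau'}{\nabla_\theta r_\theta(\tau')}\bigr)$, where $\nabla_\theta r_\theta(\tau)=\sum_{t=1}^T\nabla_\theta r_\theta(s_t)$. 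The key structural observation is that the partition-function gradient contributes precisely the subtracted mean (a ``baseline''), which is what turns the final expression into a covariance rather than a bare expectation of a product.

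Finally, I would substitute this back into the step-one expression, pull the $\tau$-expectation outside the $s$-integral, and collapse the remaining integral via $\int h_f\!\left(\rho_E(s)/\rho_\theta(s)\right)\eta_\tau(s)\,ds = \sum_{t=1}^T h_f\!\left(\rho_E(s_t)/\rho_\theta(s_t)\right)$, which follows from $\eta_\tau(s)=\sum_t\mathbbm{1}(s_t=s)$. Writing $H_f(\tau)=\sum_{t=1}^T h_f\!\left(\rho_E(s_t)/\rho_\theta(s_t)\right)$ and $R_\theta(\tau)=\sum_{t=1}^T\nabla_\theta r_\theta(s_t)$, this yields $\nabla_\theta L_f(\theta) = \tfrac{1}{\alpha T}\,\E{\tau\sim\rho_\theta(\tau)}{\bigl(R_\theta(\tau)-\E{\tau'}{R_\theta(\tau')}\bigr)H_f(\tau)}$, and expanding the product identifies this with $\tfrac{1}{\alpha T}\,\mathrm{cov}_{\tau\sim\rho_\theta(\tau)}\!\left(H_f(\tau),\,R_\theta(\tau)\right)$, which is the claimed formula.

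I expect the main obstacle to be the second step: carefully tracking the normalization constants (the $1/T$ from marginalizing trajectories and the partition function $Z$, both $\theta$-dependent) and recognizing that $\nabla_\theta\log Z$ supplies exactly the mean-subtraction that assembles the covariance. An alternative route that differentiates $\rho_\theta(s)=\tfrac{1}{TZ}\int p(\tau)e^{r_\theta(\tau)/\alpha}\eta_\tau(s)\,d\tau$ termwise and handles $\nabla_\theta Z$ separately reaches the same place but with more bookkeeping. By contrast, the $f$-divergence manipulation in step one is routine once one notices the $h_f$ collapse, and step three only unfolds the definition of the visitation count.
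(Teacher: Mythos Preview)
Your proposal is correct and follows essentially the same approach as the paper: differentiate the $f$-divergence integral to isolate $h_f(\rho_E/\rho_\theta)\,\nabla_\theta\rho_\theta(s)$, compute $\nabla_\theta\rho_\theta(s)$ from the MaxEnt trajectory distribution, and then collapse the $s$-integral against $\eta_\tau$ to obtain the trajectory-level covariance. The only cosmetic difference is in step two: you compute $\nabla_\theta\rho_\theta(s)$ via the score-function identity $\nabla_\theta\log\rho_\theta(\tau)=\tfrac{1}{\alpha}\bigl(\nabla_\theta r_\theta(\tau)-\E{\tau'}{\nabla_\theta r_\theta(\tau')}\bigr)$, whereas the paper writes $\rho_\theta(s)=f(s)/Z$ and differentiates termwise with the quotient rule before applying the chain rule through $r_\theta(s^*)$ --- precisely the ``more bookkeeping'' alternative you mention at the end.
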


\begin{table}[t]
    \vspace{1mm}
    \centering
    \begin{tabular}{cccc}
        \toprule
       Name  & $f$-divergence $\fdiv{P}{Q}$ & Generator $f(u)$ & $h_f(u)$ \\
       \midrule
        \textbf{FKL} & $\int p(x) \log \frac{p(x)}{q(x)}dx$ &  $u\log u$ & $-u$   \\
        \textbf{RKL} & $\int q(x) \log \frac{q(x)}{p(x)}dx$ &  $-\log u$ & $1-\log u$   \\
        \textbf{JS}  & \scalebox{0.75}{$\frac{1}{2}\int p(x) \log \frac{2p(x)}{p(x)+q(x)} + q(x) \log \frac{2q(x)}{p(x)+q(x)}dx$} &  $u\log u -(1+u)\log \frac {1+u}2$ & $-\log(1+u)$   \\
      \bottomrule
    \end{tabular}
    \vspace{1mm}
    \caption{
    {Selected list of $f$-divergences $\fdiv{P}{Q}$ with generator functions $f$ and $h_f$ defined in Theorem \ref{thm:fdiv}, where $f$ is convex, lower-semicontinuous and $f(1)=0$.}
    }
    \label{tab:f-div}
    \vspace{-1.2em}
\end{table}

Choosing the $f$-divergence to be Forward Kullback-Leibler (FKL), Reverse Kullback-Leibler (RKL), or Jensen-Shannon (JS) instantiates $h_f$ (see Table~\ref{tab:f-div}).  Note that the gradient of the RKL objective has a special property in that we can specify the expert as an \textit{unnormalized} log-density (i.e. energy), since in $h_{\mathrm{RKL}}(\frac{\rho_E(s)}{\rho_{\theta}(s)})=1- \log \rho_E(s) + \log \rho_{\theta}(s)$, the normalizing factor of $\rho_E(s)$ does not change the gradient (by linearity of covariance). This makes density specification much easier in a number of scenarios.
Intuitively, since $h_f$ is a monotonically decreasing function ($h'_f(u) = -f''(u)u<0$) over $\mathbb{R}^+$, the gradient descent tells the reward function to increase the rewards of those state trajectories that have higher sum of density ratios $\sum_{t=1}^T \frac{\rho_E(s_t)}{\rho_{\theta}(s_t)}$ so as to minimize the objective.

\subsection{Learning a Stationary Reward by Gradient Descent}

We now build upon Theorem~\ref{thm:fdiv} to design a practical algorithm for learning the reward function $r_\rparam$ (Algorithm.~\ref{algo:full}). Given expert information (state density or observation samples) and an arbitrary $f$-divergence, the algorithm alternates between using MaxEnt RL with the current reward, and updating the reward parameter using gradient descent based on the analytic gradient.

If the provided expert data is in the form of expert state density $\rho_E(s)$, we can fit a density model $\hat\rho_{\theta}(s)$ to estimate agent state density $\rho_{\theta}(s)$ and thus estimate the density ratio required in gradient. If we are given samples from expert observations $s_E$, we can fit a discriminator $D_\omega(s)$ in each iteration to estimate the density ratio by optimizing the binary cross-entropy loss:
\begin{equation}
\label{eq:disc}
\max_\omega \E{s\sim s_E}{\log D_\omega(s)} + \E{s\sim \rho_\theta(s)}{\log (1- D_\omega(s)}
\end{equation}
where the optimal discriminator satisfies $D^*_\omega(s) = \frac{\rho_E(s)}{\rho_E(s) +\rho_{\theta}(s)}$~\cite{goodfellow2014generative}, thus the density ratio can be estimated by $\frac{\rho_E(s)}{\rho_{\theta}(s)}\approx \frac{D_\omega(s)}{1-D_\omega(s)}$, which is the input to $h_f$.

\begin{algorithm}
\SetAlgoLined
 \SetKwInOut{Input}{Input}
 \SetKwInOut{Output}{Output}
 \Input{Expert state density $\rho_E(s)$ or expert observations $s_E$ , $f$-divergence}

 \Output{Learned reward $r_\rparam$, Policy $\pi_\theta$}
  Initialize $r_\rparam$, and density estimation model (provided $\rho_E(s)$) or disciminator $D_\omega$ (provided $s_E$)
  
 \For{$i \leftarrow 1$ \KwTo $Iter$}{

 $\pi_\theta \leftarrow$ MaxEntRL($r_\rparam$) and collect agent trajectories $\tau_\theta$
 
\If{provided $\rho_E(s)$}
  {Fit the density model $\hat\rho_{\theta}(s)$ to the state samples from $\tau_\theta$}
\Else
  {\tcp{provided $s_E$ }Fit the discriminator $D_\omega$ by Eq. \ref{eq:disc} using expert and agent state samples from $s_E$ and $\tau_\theta$}
  
  Compute sample gradient $\hat \grad_\rparam L_f(\rparam)$ for Eq. \ref{eq:fdiv} over $\tau_\theta$
  
  $\rparam \leftarrow \rparam - \lambda \hat\grad_\rparam L_f(\rparam)$
 }
 \caption{Inverse RL via State Marginal Matching ($f$-IRL)}
 \label{algo:full}
\end{algorithm}

\subsection{Robust Reward Recovery under State-only Ground-truth Reward}
\label{sec:robust}

IRL methods are different from IL methods in that they recover a reward function in addition to the policy. A hurdle in this process is often the reward ambiguity problem, explored in~\cite{ng1999policy,fu2017learning}. This ambiguity arises due to the fact that the optimal policy remains unchanged under the following reward transformation~\cite{ng1999policy}:
\begin{equation}
    \hat{r}(s,a,s') = r_{\mathrm{gt}}(s,a,s') + \gamma\Phi(s') - \Phi(s)
\end{equation}
for any function $\Phi$. In the case where the ground-truth reward is a function over states only (i.e., $r_{\mathrm{gt}}(s)$), $f$-IRL is able to recover the \textit{disentangled} reward function ($r_{\text{IRL}}$) that matches the ground truth reward  $r_{\mathrm{gt}}$ up to a constant. The obtained reward function is robust to different dynamics -- for any underlying dynamics, $r_{\text{IRL}}$ will produce the same optimal policy as $r_{\mathrm{gt}}$. We formalize this claim in Appendix~\ref{sec:proof_robust} (based on Theorem 5.1 of AIRL~\cite{fu2017learning}).

AIRL uses a special parameterization of the discriminator to learn state-only rewards. A disadvantage of their approach is that AIRL needs to approximate a separate reward-shaping network apart from the reward network. In contrast, our method naturally recovers a state-only reward function.

\subsection{Practical Modification in the Exact Gradient}
\label{sec:trick}

In practice with high-dimensional observations, when the agent's current trajectory distribution is far off from the expert trajectory distribution, we find that there is little supervision available through our derived gradient, leading to slow learning. Therefore, when expert trajectories are provided, we bias the gradient (Eq. \ref{eq:fdiv}) using a mixture of agent and expert trajectories inspired by GCL~\cite{finn2016guided}, which allows for richer supervision and faster convergence. Note that at convergence, the gradient becomes unbiased as the agent's and expert's trajectory distribution matches.

\begin{equation}
\begin{split}
    \label{eq:fdiv_biased}
   \Tilde{\nabla}_\theta L_f(\theta):= \frac{1}{\alpha T}\mathrm{cov}_{\tau\sim \frac{1}{2}(\rho_{\theta}(\tau) + \rho_E(\tau))}\left(\sum_{t=1}^{T} h_f\left(\frac{\rho_E(s_t)}{\rho_{\theta}(s_t)}\right),\sum_{t=1}^{T} \nabla_\theta r_\theta(s_t)\right)
\end{split}
\end{equation}
where the expert trajectory distribution $\rho_E(\tau)$ is uniform over samples $\tau_E$.

\section{Experiments}
\label{sec:experiments}

In our experiments, we seek answers to the following questions:
\begin{enumerate}
    \item Can $f$-IRL learn a policy that matches the given expert state density?
    \item Can $f$-IRL learn good policies on high-dimensional continuous control tasks in a sample-efficient manner? 
    \item Can $f$-IRL learn a reward function that induces the expert policy?
    \item How can learning a stationary reward function help solve downstream tasks?
\end{enumerate}

\textbf{Comparisons}.\; To answer these questions, we compare $f$-IRL against two classes of existing imitation learning algorithms: (1) those that learn only the policy, including Behavior Cloning (BC), GAIL~\cite{ho2016generative}, and $f$-MAX-RKL\footnote{A variant of AIRL~\cite{fu2017learning} proposed in~\cite{ghasemipour2019divergence}  only learns a policy and does not learn a reward.}~\cite{ghasemipour2019divergence}; and (2) IRL methods that learn both a reward and a policy simultaneously, including MaxEnt IRL~\cite{ziebart2008maximum} and AIRL \cite{fu2017learning}. The rewards/discriminators of the baselines are parameterized to be state-only.  We use SAC~\cite{haarnoja2018soft} as the base MaxEnt RL algorithm. Since the original AIRL uses TRPO~\cite{schulman2015trust}, we re-implement a version of AIRL that uses SAC as the underlying RL algorithm for fair comparison. For our method ($f$-IRL), MaxEnt IRL, and AIRL, we use a MLP for reward parameterization.

\textbf{Tasks}.\; We evaluate the algorithms on several tasks:
\begin{itemize}
    \item \textbf{Matching Expert State Density}: In Section~\ref{sec:matching-expert-density}, the task is to learn a policy that matches the given expert state density. 
    \item \textbf{Inverse Reinforcement Learning Benchmarks}: In Section~\ref{sec:irl-experiments}, the task is to learn a reward function and a policy from expert trajectory samples. We collected expert trajectories by training SAC~\cite{haarnoja2018soft} to convergence on each environment. We trained all the methods using varying numbers of expert trajectories $\{1, 4, 16\}$ to test the robustness of each method to the amount of available expert data. 
    \item \textbf{Using the Learned Reward for Downstream Tasks}: In Section~\ref{sec:exp-downstream-tasks}, we first train each algorithm to convergence, then use the learned reward function to train a new policy on a related downstream task. We measure the performance on downstream tasks for evaluation.
\end{itemize} 

We use five MuJoCo continuous control locomotion environments~\cite{todorov2012mujoco, brockman2016openai} with joint torque actions, illustrated in Figure~\ref{fig:envs}.
Further details about the environment, expert information (samples or density specification), and hyperparameter choices can be found in Appendix \ref{appendix:implementation}. 

\begin{figure}[!h]
\vspace{2mm}
    \centering
    \begin{subfigure}[t]{0.19\linewidth}
        \centering
        \includegraphics[width=\linewidth]{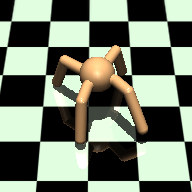}
    \end{subfigure}%
    ~ 
    \begin{subfigure}[t]{0.19\linewidth}
        \centering
        \includegraphics[width=\linewidth]{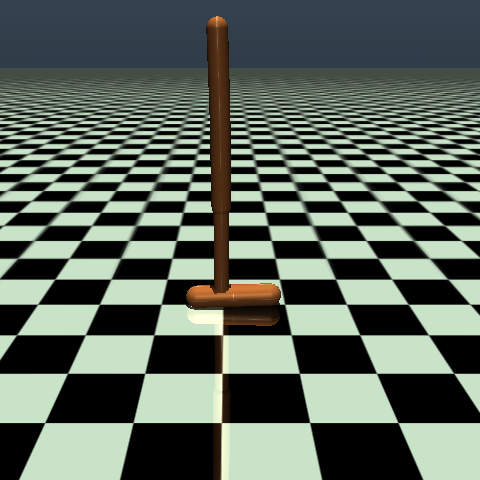}
    \end{subfigure}%
    ~ 
        \begin{subfigure}[t]{0.19\linewidth}
        \centering
        \includegraphics[width=\linewidth]{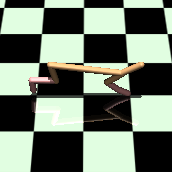}
    \end{subfigure}%
    ~ 
    \begin{subfigure}[t]{0.19\linewidth}
        \centering
        \includegraphics[width=\linewidth]{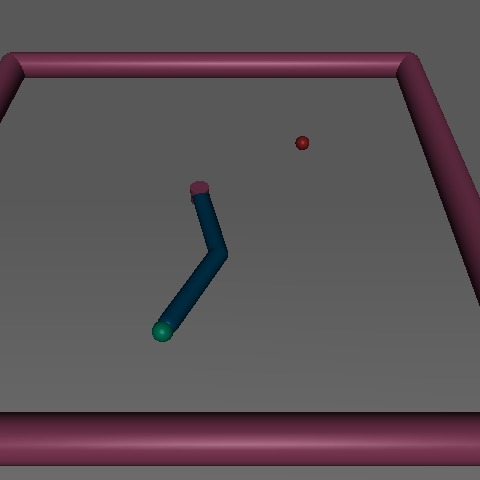}
    \end{subfigure}%
    ~ 
    \begin{subfigure}[t]{0.19\linewidth}
        \centering
        \includegraphics[width=\linewidth]{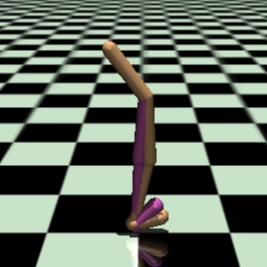}
    \end{subfigure}
    \caption{ \textbf{Environments}: (left to right) Ant-v2, Hopper-v2, HalfCheetah-v2, Reacher-v2, and Walker2d-v2.}
    \label{fig:envs}
\end{figure}

\subsection{Matching the Specified Expert State Density}
\label{sec:matching-expert-density}
First, we check whether $f$-IRL can learn a policy that matches the given expert state density of the fingertip of the robotic arm in the 2-DOF Reacher environment. We evaluate the algorithms using two different expert state marginals: (1) a Gaussian distribution centered at the goal for single goal-reaching, and (2) a mixture of two Gaussians, each centered at one goal. 
Since this problem setting assumes access to the expert density only, we use importance sampling to generate expert samples required by the baselines.

In Figure~\ref{fig:reacher-kl}, we report the estimated forward and reverse KL divergences in state marginals between the expert and the learned policy. 
For $f$-IRL and MaxEnt IRL, we use Kernel Density Estimation (KDE) to estimate the agent's state marginal. We observe that the baselines demonstrate unstable convergence, which might be because those methods optimize the $f$-divergence approximately. Our method \{FKL, JS\} $f$-IRL outperforms the baselines in the forward KL and the reverse KL metric, respectively.

\begin{figure}
    \centering
    \begin{tabular}{cc}
    \multicolumn{2}{c}{\includegraphics[width=0.95\textwidth]{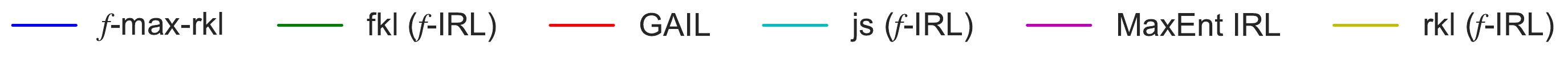}}\\ 
    \includegraphics[width=0.482\textwidth]{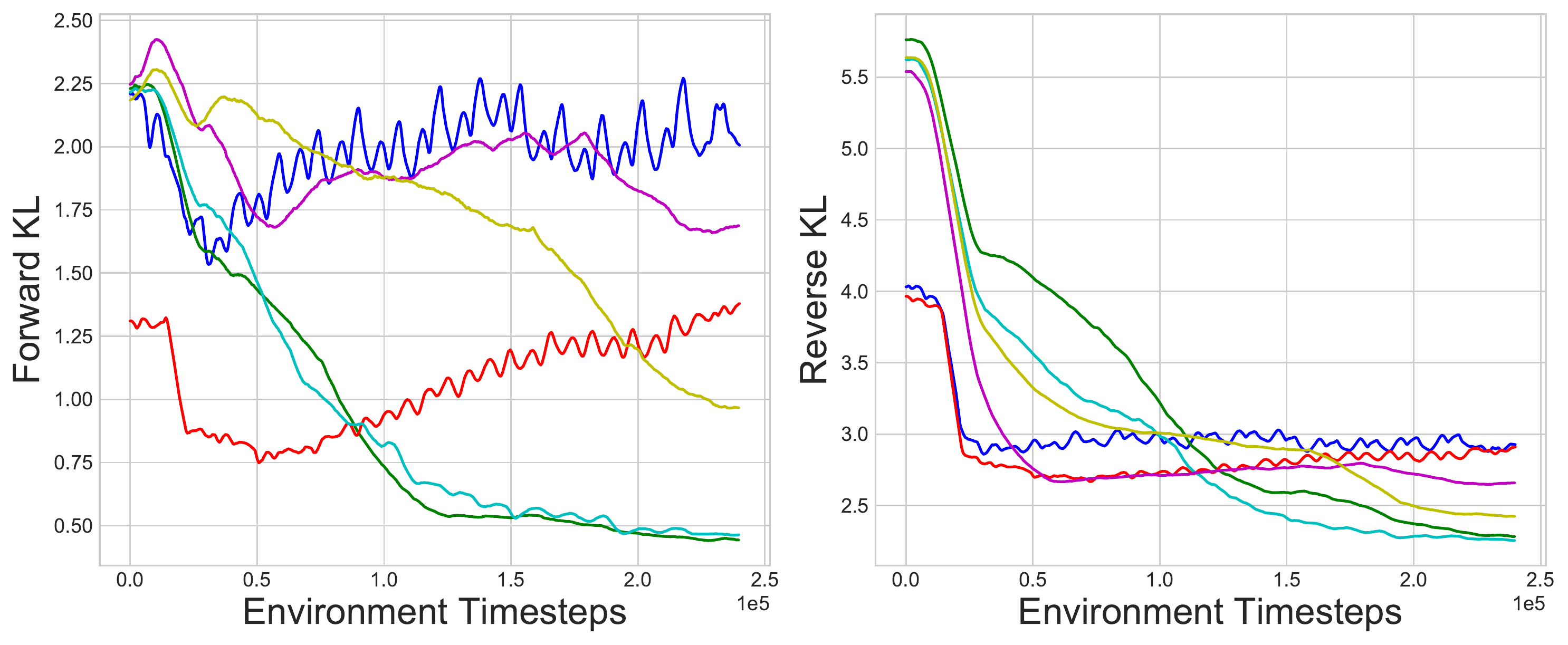}
    & \includegraphics[width=0.482\textwidth]{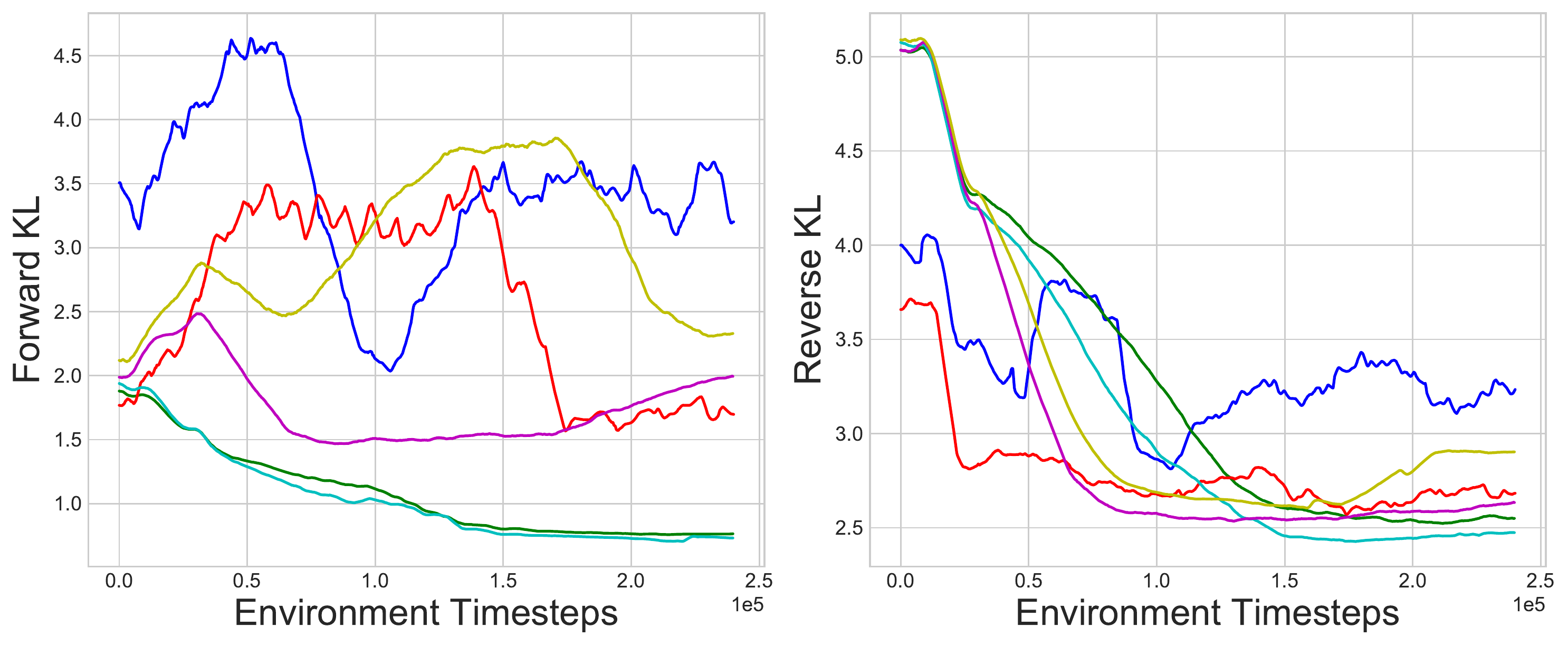}\\
    (a) Expert Density: Gaussian & 
    (b) Expert Density: Mixture of two Gaussians\\
    \end{tabular}
 \caption{Forward (left) and Reverse (right) KL curves in the Reacher environment for different expert densities of all methods. Curves are smoothed in a window of 120 evaluations.}
    \label{fig:reacher-kl}
\end{figure}

\subsection{Inverse Reinforcement Learning Benchmarks}\label{sec:irl-experiments}

Next, we compare $f$-IRL and the baselines on IRL benchmarks, where the task is to learn a reward function and a policy from expert trajectory samples. We use the modification proposed in Section~\ref{sec:trick} to alleviate the difficulty in optimizing the $f$-IRL objective with high-dimensional states. 

\textbf{Policy Performance}.\; We check whether $f$-IRL can learn good policies on high-dimensional continuous control tasks in a sample-efficient manner from expert trajectories.
Figure~\ref{fig:mujoco-curves} shows the learning curves of each method in the four environments with \textit{one} expert trajectory provided. $f$-IRL and MaxEnt IRL demonstrate much faster convergence in most of the tasks than $f$-MAX-RKL. Table~\ref{tab:mujoco_all} shows the final performance of each method in the four tasks, measured by the ratio of agent returns (evaluated using the ground-truth reward) to expert returns.\footnote{The unnormalized agent and expert returns are reported in Appendix~\ref{app:result}.} While MaxEnt IRL provides a strong baseline, $f$-IRL outperforms all baselines on most tasks especially in Ant, where the FKL ($f$-IRL) has much higher final performance and is less sensitive to the number of expert trajectories compared to the baselines. 
In contrast, we found the original implementation of $f$-MAX-RKL to be extremely sensitive to hyperparameter settings. We also found that AIRL performs poorly even after tremendous tuning, similar to the findings in~\cite{liu2019state,liu2020energy}.

\textbf{Recovering the Stationary Reward Function}.\;
We also evaluate whether $f$-IRL can recover a stationary reward function that induces the expert policy.
To do so, we train a SAC agent from scratch to convergence using the reward model obtained from each IRL method. We then evaluate the trained agents using the ground-truth reward to test whether the learned reward functions are good at inducing the expert policies. 

Table~\ref{tab:mujoco_final} shows the ratio of the final returns of policy  trained from scratch using the rewards learned from different IRL methods with one expert trajectory provided, to expert returns.  Our results show that MaxEnt IRL and $f$-IRL are able to learn \textit{stationary} rewards that can induce a policy close to the optimal expert policy. 

\begin{figure}[t]
\vspace{1mm}
\centering
\includegraphics[width=0.95\columnwidth]{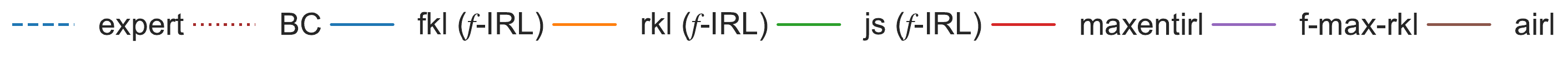}\\
    \includegraphics[width=1.0\columnwidth,height=3.2cm]{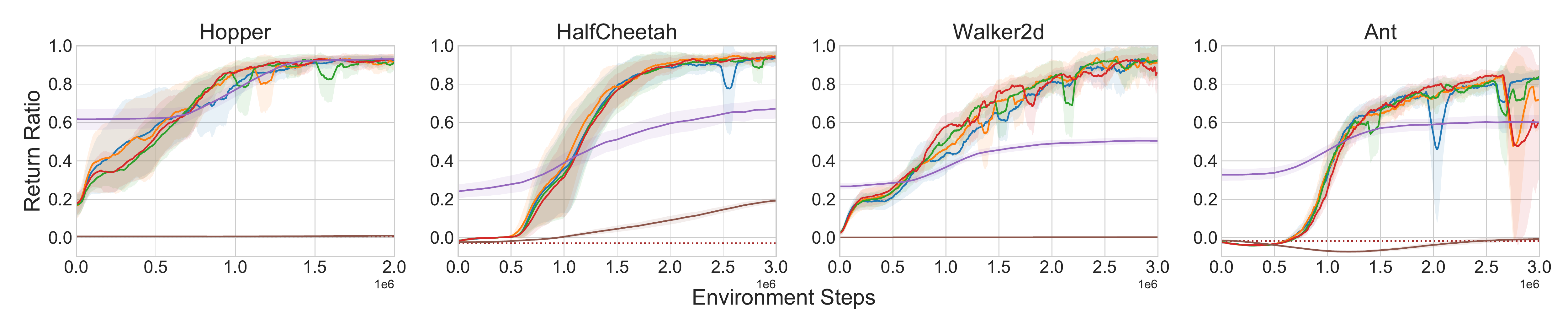}
 \caption{Training curves for $f$-IRL and 4 other baselines - BC, MaxEnt IRL, $f$-MAX-RKL and AIRL with one expert demonstration. Solid curves depict the mean of 3 trials and the shaded area shows the standard deviation. The dashed blue line represents the expert performance and the dashed red line shows the performance of a BC agent at convergence.}
 \vspace{1mm}
\label{fig:mujoco-curves}
\end{figure}

\begin{table}
    \vspace{2mm}
    \centering
    \footnotesize
    \begin{tabular}{c|ccc|ccc|ccc|ccc}
        \toprule
         Method & \multicolumn{3}{c|}{Hopper} & \multicolumn{3}{c|}{Walker2d} & \multicolumn{3}{c|}{HalfCheetah} & \multicolumn{3}{c}{Ant}\\
        \midrule
        Expert return &
  \multicolumn{3}{c|}{3592.63 $\pm$ 19.21} & \multicolumn{3}{c|}{5344.21 $\pm$ 84.45} & \multicolumn{3}{c|}{12427.49 $\pm$ 486.38} & \multicolumn{3}{c}{5926.18 $\pm$ 124.56}\\
         \# Expert traj
           & 1 & 4 & 16
           & 1 & 4 & 16
           & 1 & 4 & 16
           & 1 & 4 & 16 \\
       \midrule

        BC 
           & 0.00 & 0.13 & 0.16
           & 0.00 & 0.05 & 0.08
           & 0.00 & 0.01 & 0.02
           & 0.00 & 0.22 & 0.47 \\
        MaxEnt IRL  
           & 0.93 & 0.92 & \textbf{0.94}
           & 0.88 & 0.88 & \textbf{0.91} 
           & \textbf{0.95} & \textbf{0.98} & 0.91 
           & 0.54 & 0.71 & 0.81 \\
        $f$-MAX-RKL  
           & \textbf{0.94} & \textbf{0.93} & 0.91
           & 0.49 & 0.49 & 0.47
           & 0.71 & 0.41 & 0.65 
           & 0.60 & 0.65 & 0.62 \\
        AIRL  & 0.01 & 0.01 & 0.01
           & 0.00 & 0.00 & 0.00
           & 0.19 & 0.19 & 0.19
           & 0.00 & 0.00 & 0.00 \\
           \midrule
        FKL ($f$-IRL)  
           & 0.93 & 0.90 & 0.93 
           & \textbf{0.90} & 0.90 & 0.90
           & 0.94 & 0.97 & 0.94 
           & \textbf{0.82} & \textbf{0.83} & \textbf{0.84} \\
        RKL ($f$-IRL)  
           & 0.93 & 0.92 & 0.93
           & 0.89 & 0.90 & 0.85 
           & \textbf{0.95} & 0.97 & \textbf{0.96} 
           & 0.63 & 0.82 & 0.81 \\
        JS ($f$-IRL)  
           & 0.92 & \textbf{0.93} & \textbf{0.94}
           & 0.89 & \textbf{0.92} & 0.88
           & 0.93 & \textbf{0.98} & 0.94 
           & 0.77 & 0.81 & 0.73 \\
        \bottomrule
    \end{tabular}
    \vspace{1mm}
    \caption{ We report the ratio between the average return of the trained (stochastic) policy vs. that of the expert policy for different IRL algorithms using 1, 4 and 16 expert trajectories. All results are averaged across 3 seeds. Negative ratios are clipped to zero.}
    \label{tab:mujoco_all}
\end{table}

\begin{table}[!h]
    \vspace{2mm}
    \centering
    \begin{tabular}{c|cccc}
        \toprule
        Method & Hopper & Walker2d & HalfCheetah & Ant\\
        \midrule
        AIRL & - & - & -0.03 & - \\
        MaxEntIRL & \textbf{0.93} & \textbf{0.92} & 0.96 & 0.79 \\
        $f$-IRL  & \textbf{0.93}  & 0.88 & \textbf{1.02}  & \textbf{0.82}\\
        \bottomrule
    \end{tabular}
    \vspace{1mm}
    \caption{ The ratios of final return of the obtained policy against expert return across IRL methods. We average $f$-IRL over FKL, RKL, and JS. `-' indicates that we do not test learned rewards since AIRL does poorly at these tasks in Table~\ref{tab:mujoco_all}.}
    \label{tab:mujoco_final}
\end{table}

\begin{table}[!h]
    \centering
    \vspace{2mm}
    \begin{tabular}{cccc|c}
        \toprule
        Policy Transfer & AIRL &
        MaxEntIRL &$f$-IRL & Ground-truth\\
        using GAIL &  &
         & & Reward\\
        \midrule
        -29.9 & 130.3 & \textbf{145.5}& 141.1 & 315.5\\ 
        \bottomrule
    \end{tabular}
    \vspace{1mm}
    \caption{ Returns obtained after transferring the policy/reward on modified Ant environment using different IL methods.\label{tab:policy-transfer}}
\end{table}

\begin{figure}[!h]
\vspace{2mm}
    \centering
    \includegraphics[width=0.495\textwidth]{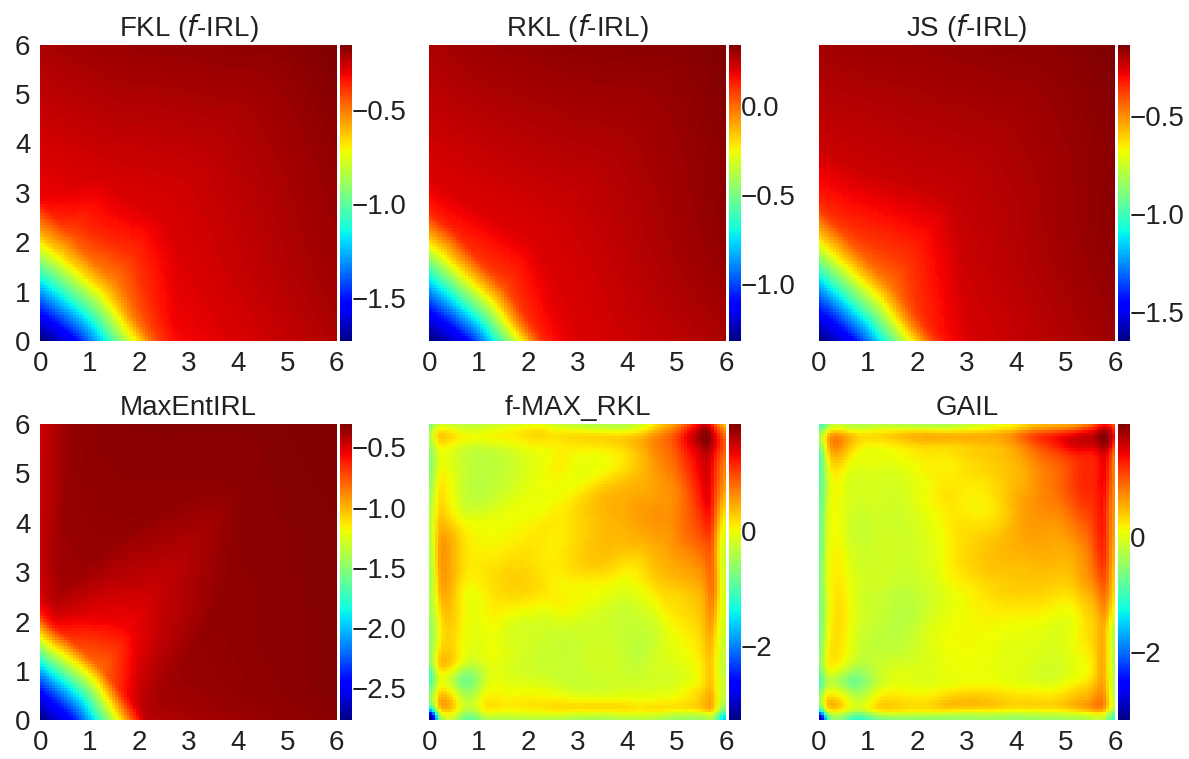}
    \includegraphics[width=0.495\textwidth]{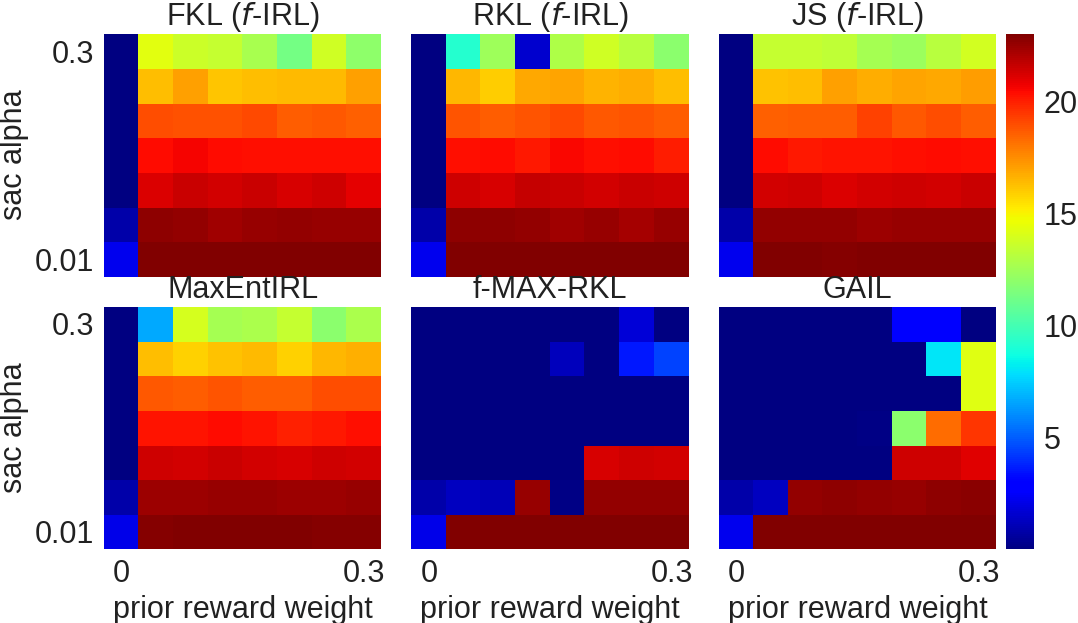}
    
    \caption{ Left: Extracted final reward of all compared methods for the uniform expert density in the point environment. Right: The task return (in terms of $r_{\text{task}}$)  with different $\alpha$ and prior reward weight $\lambda$. The performance of vanilla SAC is shown in the leftmost column with $\lambda = 0$ in each subplot.}
    \label{fig:uniform_utility}
\end{figure}

\subsection{Using the Learned Stationary Reward for Downstream Tasks}\label{sec:exp-downstream-tasks}

Finally, we investigate how the learned stationary reward can be used to learn related, downstream tasks.

\textbf{Reward prior for downstream hard-exploration tasks.} We first demonstrate the utility of the learned stationary reward by using it as a prior reward for the downstream task. Specifically, we construct a didactic point mass environment that operates under linear dynamics in a 2D $6 \times 6$ room, and actions are restricted to $[-1,1]$. The prior reward is obtained from a \textit{uniform} expert density over the whole state space, and is used to ease the learning in the hard-exploration task, where we design a difficult goal to reach with distraction rewards (full details in appendix \ref{appendix:implementation}). 

We use the learned prior reward $r_{\text{prior}}$ to augment the task reward $r_{\text{task}}$ as follows: 
\begin{equation}
r(s,a,s') = r_{\text{task}}(s,a,s') + \lambda (\gamma r_{\text{prior}}(s') - r_{\text{prior}}(s)) 
\end{equation}
where $\lambda \ge 0$ is the weight of prior reward and $\gamma$ is the discount factor.
The main theoretical result of~\cite{ng1999policy} dictates that adding a potential-based reward in this form will not change the optimal policy. 
GAIL and $f$-MAX-RKL do not extract a reward function but rather a discriminator, so we derive a prior reward from the discriminator in the same way as~\cite{ghasemipour2019divergence, ho2016generative}. 

Figure \ref{fig:uniform_utility} illustrates that the reward recovered by \{FKL, RKL, JS\} $f$-IRL and the baseline MaxEnt IRL are similar: the reward increases as the distance to the agent's start position, the bottom left corner, increases. This is intuitive for achieving the target uniform density: states farther away should have higher rewards. $f$-MAX-RKL and GAIL's discriminator demonstrate a different pattern which does not induce a uniform state distribution. 
The leftmost column in the Figure \ref{fig:uniform_utility} (Right) shows the poor performance of SAC training without reward augmentation ($\lambda=0$). This verifies the difficulty in exploration for solving the task. 
We vary $\lambda$ in the x-axis, and $\alpha$ in SAC in the y-axis, and plot the final task return (in terms of $r_{\text{task}}$) as a heatmap in the figure. The presence of larger red region in the heatmap shows that our method can extract a prior reward that is more robust and effective in helping the downstream task attain better final performance with its original reward.

\textbf{Reward transfer across changing dynamics.}
Lastly, we evaluate the algorithms on transfer learning across different environment dynamics, following the setup from~\cite{fu2017learning}. In this setup, IL algorithms are provided expert trajectories from a quadrupedal ant agent which runs forward. The algorithms are tested on an ant with two of its legs being disabled and shrunk. This requires the ant to significantly change its gait to adapt to the disabled legs for running forward. 

We found that a forward-running policy obtained by GAIL fails to transfer to the disabled ant. In contrast, IRL algorithms such as $f$-IRL are successfully able to learn the expert's reward function using expert demonstrations from the quadrupedal ant, and use the reward to train a policy on the disabled ant. The results in Table~\ref{tab:policy-transfer} show that the reward learned by $f$-IRL is robust and enables the agent to learn to move forward with just the remaining two legs.

\section{Conclusion}
In summary, we have proposed $f$-IRL, a practical IRL algorithm that distills an expert's state distribution into a stationary reward function. Our $f$-IRL algorithm can learn from either expert samples (as in traditional IRL), or a specified expert density (as in SMM~\cite{lee2019efficient}), which opens the door to supervising IRL with different types of data. These types of supervision can assist agents in solving tasks faster, encode preferences for how tasks are performed, and indicate which states are unsafe and should be avoided. Our experiments demonstrate that $f$-IRL is more sample efficient in the number of expert trajectories and environment timesteps as demonstrated on MuJoCo benchmarks.

\label{sec:discussion}


\acknowledgments{\footnotesize
This paper is an extension on the course project of CMU 10-708 Probabilistic Graphical Models in Spring 2020, and we thank the course staff to provide the platform. We thank the anonymous reviewers for their useful comments. LL is supported by the National Science Foundation (DGE-1745016). BE is supported by the Fannie and John Hertz Foundation and the National Science Foundation (DGE-1745016).}


{
\footnotesize
\bibliography{references}  
}

\normalsize

\onecolumn
\appendix

\section{Derivation and Proof}
\label{app:deriv}
This section provides the derivation and proof for the main paper. Section \ref{app:gradient} and \ref{sec:f-div} provide the derivation of Theorem \ref{thm:fdiv}, and section \ref{sec:proof_robust} provides the details about section \ref{sec:robust}.

\subsection{Analytical Gradient of State Marginal Distribution}
\label{app:gradient}

In this subsection, we start by deriving a general result - gradient of state marginal distribution w.r.t. parameters of the reward function. We will use this gradient in the next subsection \ref{sec:f-div} where we derive the gradient of $f$-divergence objective.

Based on the notation introduced in section \ref{sec:prelim}, we start by writing the probability of trajectory $\tau=(s_0,s_1,\dots,s_T)$ of fixed horizon $T$ under the optimal MaxEnt trajectory distribution for $r_\theta(s)$~\cite{ziebart2010modeling}.

\begin{equation}
    \rho_{\theta}(\tau) \propto \rho_0(s_0) \prod_{t=0}^{T-1} p(s_{t+1}|s_t,a_t) e^{\sum_{t=1}^{T} r_{\theta}(s_t)/\alpha}
\end{equation}

Let $p(\tau) = \rho_0(s_0) \prod_{t=0}^{T-1} p(s_{t+1}|s_t,a_t)$, which is the probability of the trajectory under the dynamics of the environment.

Explicitly computing the normalizing factor, we can write the distribution over trajectories as follows:
\begin{equation}
    \rho_{\theta}(\tau) = \frac{p(\tau) e^{\sum_{t=1}^{T} r_{\theta}(s_t)/\alpha}}{\int p(\tau) e^{\sum_{t=1}^{T} r_{\theta}(s_t)/\alpha}d\tau}
\end{equation}

Let $\eta_{\tau}(s)$ denote the number of times a state occurs in a trajectory $\tau$. We now compute the marginal distribution of all states in the trajectory:
\be
    \rho_{\theta}(s) \propto \int p(\tau) e^{\sum_{t=1}^{T} r_{\theta}(s_t)/\alpha} \eta_{\tau}(s)d\tau
\ee
where
\be
   \eta_{\tau}(s)= \sum_{t=1}^T \mathbbm{1}(s_t = s)
\ee
is the empirical frequency of state $s$ in trajectory $\tau$ (omitting the starting state $s_0$ as the policy cannot control the initial state distribution).

The marginal distribution over states can now be written as:\\
\be
    \rho_{\theta}(s) \propto \int p(\tau) e^{\sum_{t=1}^{T} r_{\theta}(s_t)/\alpha}  \eta_{\tau}(s) d\tau
\ee

In the following derivation, we will use $s_t$ to denote states in trajectory $\tau$ and $s'_t$ to denote states from trajectory $\tau'$. Explicitly computing the normalizing factor, the marginal distribution can be written as follows:
\begin{equation}
\begin{split}
        \rho_{\theta}(s) &= \frac{\int p(\tau) e^{\sum_{t=1}^{T} r_{\theta}(s_t)/\alpha} \eta_{\tau}(s)d\tau}{\int \int p(\tau') e^{\sum_{t=1}^{T} r_{\theta}(s'_t)/\alpha} \eta_{\tau'}(s')d\tau' ds'}\\
    &= \frac{\int p(\tau) e^{\sum_{t=1}^{T} r_{\theta}(s_t)/\alpha} \eta_{\tau}(s)d\tau}{\int p(\tau') e^{\sum_{t=1}^{T}  r_{\theta}(s'_t)/\alpha} \int \eta_{\tau'}(s') ds' d\tau' }\\
    &= \frac{\int p(\tau) e^{\sum_{t=1}^{T} r_{\theta}(s_t)/\alpha} \eta_{\tau}(s)d\tau}{T \int p(\tau') e^{\sum_{t=1}^{T}  r_{\theta}(s'_t)/\alpha} d\tau' }
\end{split}
\end{equation}

In the second step we swap the order of integration in the denominator. The last line follows because only the $T$ states in $\tau$ satisfy $s \in \tau$. Finally, we define $f(s)$ and $Z$ to denote the numerator (dependent on $s$) and denominator (normalizing constant), to simplify notation in further calculations. 
\begin{equation}
    \begin{split}
        f(s) &= \int p(\tau) e^{\sum_{t=1}^{T} r_{\theta}(s_t)/\alpha} \eta_{\tau}(s)d\tau\\
    Z &= T \int p(\tau) e^{\sum_{t=1}^{T}  r_{\theta}(s_t)/\alpha} d\tau \\
    \rho_{\theta}(s) &= \frac{f(s)}{Z}
    \end{split}
\end{equation}
    
As an initial step, we compute the derivatives of $f(s)$ and $Z$ w.r.t reward function at some state $r_{\theta}(s^*)$.
\begin{align}
    \frac{df(s)}{dr_{\theta}(s^*)}  = \frac1\alpha\int p(\tau) e^{\sum_{t=1}^{T} r_{\theta}(s_t)/\alpha} \eta_{\tau}(s)\eta_{\tau}(s^*)d\tau \\
    \frac{dZ}{dr_{\theta}(s^*)} = \frac T\alpha \int p(\tau) e^{\sum_{t=1}^{T} r_{\theta}(s_t)/\alpha}  \eta_{\tau}(s^*)d\tau = \frac T\alpha f(s^*)
\end{align}

We can then apply the quotient rule to compute the derivative of policy marginal distribution w.r.t. the reward function. 
\begin{equation}
\begin{split}
      \frac{d\rho_{\theta}(s)}{dr_{\theta}(s^*)} &= \frac{Z\frac{df(s)}{dr_{\theta}(s^*)}-f(s)\frac{dZ}{dr_{\theta}(s^*)}}{Z^2}\\
     &=  \frac{\int p(\tau) e^{\sum_{t=1}^{T} r_{\theta}(s_t)/\alpha}\eta_{\tau}(s) \eta_{\tau}(s^*)d\tau}{\alpha Z} - \frac{f(s)}{Z}\frac{Tf(s^*)}{\alpha Z}\\
     &= \frac{\int p(\tau) e^{\sum_{t=1}^{T} r_{\theta}(s_t)/\alpha} \eta_{\tau}(s) \eta_{\tau}(s^*)d\tau}{\alpha Z} - \frac T \alpha \rho_{\theta}(s)\rho_{\theta}(s^{*})
     \end{split}
\end{equation}
  
Now we have all the tools needed to get the derivative of $\rho_{\theta}$ w.r.t. $\theta$ by the chain rule.
\begin{equation}
\label{eq:rho_grad}
\begin{split}
 \frac{d\rho_{\theta}(s)}{d\theta} &= \int \frac{d\rho_{\theta}(s)}{dr_{\theta}(s^*)} \frac{dr_{\theta}(s^*)}{d\theta} ds^*\\
    &= \frac1\alpha\int \left(\frac{\int p(\tau) e^{\sum_{t=1}^{T} r_{\theta}(s_t)/\alpha} \eta_{\tau}(s) \eta_{\tau}(s^*)d\tau}{Z} - T \rho_{\theta}(s)\rho_{\theta}(s^{*})\right) \frac{dr_{\theta}(s^*)}{d\theta} ds^*\\
    &= \frac{1}{\alpha Z} \int\int p(\tau) e^{\sum_{t=1}^{T} r_{\theta}(s_t)/\alpha} \eta_{\tau}(s)\eta_{\tau}(s^*) \frac{dr_{\theta}(s^*)}{d\theta} ds^* d\tau 
    - \frac T \alpha \rho_{\theta}(s)\int \rho_{\theta}(s^*)\frac{dr_{\theta}(s^*)}{d\theta}ds^*\\
    &= \frac{1}{\alpha Z} \int   p(\tau) e^{\sum_{t=1}^{T} r_{\theta}(s_t)/\alpha} \eta_{\tau}(s) \sum_{t=1}^{T} \frac{dr_{\theta}(s_t)}{d\theta}d\tau - \frac T \alpha\rho_{\theta}(s)\int \rho_{\theta}(s^*)\frac{dr_{\theta}(s^*)}{d\theta}ds^*
\end{split}
\end{equation}

\subsection{Analytical Gradient of $f$-divergence objective}
\label{sec:f-div}

$f$-divergence~\cite{ali1966general} is a family of divergence, which generalizes forward/reverse KL divergence. Formally, let $P$ and $Q$ be two probability distributions over a space $\Omega$, then for a convex and lower-semicontinuous function $f$ such that $f(1)=0$, the $f$-divergence of $P$ from $Q$ is defined as:

\begin{equation}
\fdiv{P}{Q}\defeq \int_\Omega f\left(\frac{dP}{dQ}\right)dQ
\end{equation}

Applied to state marginal matching between expert density $\rho_E(s)$ and agent density $\rho_\theta(s)$ over state space $\mathcal S$, the $f$-divergence objective is:

\begin{equation}
\min_\theta L_f(\theta) \defeq \fdiv{\rho_E}{\rho_\theta} = \int_{\mathcal S} f\left(\frac{\rho_E(s)}{\rho_\theta(s)}\right)\rho_\theta(s)ds
\end{equation}

Now we show the proof of \textbf{Theorem \ref{thm:fdiv}} on the gradient of $f$-divergence objective:

\begin{proof}
The gradient of the $f$-divergence objective can be derived by chain rule:

\begin{equation}
\begin{split}
\nabla_\theta L_f(\theta) &= \int \nabla_\theta \left(f\left(\frac{\rho_E(s)}{\rho_\theta(s)}\right)\rho_\theta(s)\right)d s\\
&= \int \left(f\left(\frac{\rho_E(s)}{\rho_\theta(s)}\right) - f'\left({\frac{\rho_E(s)}{\rho_\theta(s)}}\right) \frac{\rho_E(s)}{\rho_\theta(s)}\right)\frac{d \rho_\theta(s)}{d\theta} ds\\
&\triangleq  \int h_f\left(\frac{\rho_E(s)}{\rho_\theta(s)}\right)\frac{d \rho_\theta(s)}{d\theta} ds
\end{split}
\end{equation}
where we denote $h_f(u) \triangleq f(u) - f'(u)u.$ for convenience.\footnote{Note that if $f(u)$ is non-differentiable at some points, such as $f(u)=|u-1|/2$ at $u=1$ for Total Variation distance, we take one of its subderivatives.}

Substituting the gradient of state marginal distribution w.r.t $\theta$ in Eq. \ref{eq:rho_grad}, we have:
\begin{equation}
\footnotesize
\begin{split}
    &\nabla_\theta L_f(\theta) \\
    &=  \int h_f\left(\frac{\rho_E(s)}{\rho_\theta(s)}\right) \left(\frac{1}{\alpha Z} \int   p(\tau) e^{\sum_{t=1}^{T} r_{\theta}(s_t)/\alpha} \eta_{\tau}(s) \sum_{t=1}^{T} \frac{dr_{\theta}(s_t)}{d\theta}d\tau - \frac T \alpha\rho_{\theta}(s)\int \rho_{\theta}(s^*)\frac{dr_{\theta}(s^*)}{d\theta}ds^*\right)ds \\
    &= \frac{1}{\alpha Z} \int   p(\tau) e^{\sum_{t=1}^{T} r_{\theta}(s_t)/\alpha} \sum_{t=1}^{T}h_f\left(\frac{\rho_E(s_t)}{\rho_\theta(s_t)}\right) \sum_{t=1}^{T} \frac{dr_{\theta}(s_t)}{d\theta}d\tau \\
    &\quad - \frac T\alpha \int h_f\left(\frac{\rho_E(s)}{\rho_\theta(s)}\right) \rho_{\theta}(s) \left(\int\rho_{\theta}(s^*)\frac{dr_{\theta}(s^*)}{d\theta}ds^*\right) ds \\
    &= \frac{1}{\alpha T} \int \rho_{\theta}(\tau)\sum_{t=1}^{T}h_f\left(\frac{\rho_E(s_t)}{\rho_\theta(s_t)}\right) \sum_{t=1}^{T} \frac{dr_{\theta}(s_t)}{d\theta}d\tau \\
    &\quad - \frac T \alpha \left(\int h_f\left(\frac{\rho_E(s)}{\rho_\theta(s)}\right) \rho_{\theta}(s) ds\right)\left(\int \rho_{\theta}(s^*)\frac{dr_{\theta}(s^*)}{d\theta}ds^*\right) \\
     &= \frac{1}{\alpha T} \E{\tau \sim \rho_{\theta}(\tau)}{\sum_{t=1}^{T}h_f\left(\frac{\rho_E(s_t)}{\rho_\theta(s_t)}\right) \sum_{t=1}^{T}\frac{dr_{\theta}(s_t)}{d\theta}}  - \frac T\alpha \E{s \sim\rho_{\theta}(s)}{h_f\left(\frac{\rho_E(s)}{\rho_\theta(s)}\right)}\E{s \sim\rho_{\theta}(s)}{\frac{dr_{\theta}(s)}{d\theta}}
\end{split}
\end{equation}

To gain more intuition about this equation, we can convert all the expectations to be over the trajectories:
\begin{equation}
\footnotesize
\begin{split}
   &\nabla_\theta L_f(\theta) \\
   &= \frac{1}{\alpha T} \left(
   \E{\rho_{\theta}(\tau)}{\sum_{t=1}^{T}h_f\left(\frac{\rho_E(s_t)}{\rho_\theta(s_t)}\right) \sum_{t=1}^{T}\nabla_\theta r_{\theta}(s_t)}
   -
   \E{\rho_{\theta}(\tau)}{\sum_{t=1}^T h_f\left(\frac{\rho_E(s_t)}{\rho_\theta(s_t)}\right)}
   \E{\rho_{\theta}(\tau)}{\sum_{t=1}^T\nabla_\theta r_{\theta}(s_t)}
   \right)\\
   &= \frac{1}{\alpha T}\mathrm{cov}_{\tau\sim \rho_{\theta}(\tau)}\left(\sum_{t=1}^{T} h_f\left(\frac{\rho_E(s_t)}{\rho_{\theta}(s_t)}\right),\sum_{t=1}^{T} \nabla_\theta r_\theta(s_t)\right)
\end{split}
\end{equation}
Thus we have derived the analytic gradient of $f$-divergence for state-marginal matching as shown in Theorem \ref{thm:fdiv}.
\end{proof}

\subsection{Extension to Integral Probability Metrics in $f$-IRL}

Integral Probability Metrics (IPM)~\cite{muller1997integral} is another class of divergence based on dual norm, examples of which include Wasserstein distance~\cite{arjovsky2017wasserstein} and MMD~\cite{li2017mmd}. We can use Kantorovich-Rubinstein duality~\cite{villani2008optimal} to rewrite the IPM-based state marginal matching as:
\begin{equation}
\begin{split}
L_{\mathrm{B}}(\theta) = \|\rho_E(s) - \rho_\theta(s)\|_B \defeq \max_{D_\omega \in B} \E{\rho_E(s)}{D_\omega(s)} - \E{\rho_\theta(s)}{D_\omega(s)}
\end{split}
\end{equation}
where $B$ is a symmetric convex set of functions and $D_\omega$ is the critic function in \cite{arjovsky2017wasserstein}.

Then the analytical gradient of the objective $L_{\mathrm{B}}(\theta)$ can be derived to be:
\begin{equation}
\begin{split}
   \nabla_\theta L_{\mathrm{B}}(\theta) = -\frac{1}{\alpha T}\mathrm{cov}_{\tau\sim \rho_{\theta}(\tau)}\left(\sum_{t=1}^{T} D_\omega\left(s_t\right),\sum_{t=1}^{T} \nabla_\theta r_\theta(s_t)\right)
\end{split}
\end{equation}
where the derivation directly follows the proof of Theorem~\ref{thm:fdiv}.

\subsection{$f$-IRL Learns Disentangled Rewards w.r.t. Dynamics}
\label{sec:proof_robust}

We follow the derivation and definitions as given in~\citet{fu2017learning} to show that $f$-IRL learns disentangled rewards. We show the definitions and theorem here for completeness. For more information, please refer to~\citet{fu2017learning}. 

We first redefine the notion of ``disentangled rewards".

\begin{definition}[Disentangled rewards]
A reward function $r'(s,a,s')$ is (perfectly) disentangled with respect to ground truth reward $r_{\text{gt}}(s,a,s')$ and a set of dynamics $\mathcal{T}$ such that under all dynamics in $T~\in~\mathcal{T}$, the optimal policy is the same: $\pi^{*}_{r',T}(a|s) = \pi^{*}_{r_{\text{gt}},T}(a|s)$
\end{definition}

Disentangled rewards can be loosely understood as learning a reward function which will produce the same optimal policy as the ground truth reward for the environment, on any underlying dynamics.

To show how $f$-IRL recovers a disentangled reward function, we need go through the definition of "Decomposability condition"
 
\begin{definition}[Decomposability condition]
Two states $s_1$,$s_2$ are defined as "1-step linked" under a dyanamics or transition distribution $T(s'|a,s)$, if there exists a state that can reach $s_1$ and $s_2$ with positive probability in one timestep. Also, this relationship can transfer through transitivity: if $s_1$ and $s_2$ are linked, and $s_2$ and $s_3$ are linked then we can consider $s_1$ and $s_3$ to be linked.\\
A transition distribution $T$ satisfies the decomposibility condition if all states in the MDP are linked with all other states.
\end{definition} 
 
This condition is mild and can be satisfied by any of the environments used in our experiments. 
 
Theorem \ref{thm:disentangled_rewards} and \ref{thm:state_only_reward} formalize the claim that $f$-IRL recovers disentangled reward functions with respect to the dynamics. The notation $Q^{*}_{r,T}$ denotes the optimal Q function under reward function $r$ and dynamics $T$, and similarly $\pi^{*}_{r,T}$ is the optimal policy under reward function $r$ and dynamics $T$.

\begin{theorem}
\label{thm:disentangled_rewards}
Let $r_{gt}(s)$ be the expert reward, and $T$ be a dynamics satisfying the decomposability condition as defined in~\cite{fu2017learning}. Suppose $f$-IRL learns a reward $r_{\text{IRL}}$ such that it produces an optimal policy in $T$: $Q^*_{r_{\text{IRL}},T}(s,a)= Q^*_{r_{gt},T}(s,a)-f(s)$ ,where $f(s)$ is an arbitrary function of the state. Then we have:

$r_{\text{IRL}}(s)=r_{gt}(s)+C$ for some constant $C$, and thus $r_{\text{IRL}}(s)$ is robust to all dynamics.
\end{theorem}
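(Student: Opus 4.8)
The plan is to mimic the argument of Theorem 5.1 in AIRL~\cite{fu2017learning}, adapting it to the state-only reward setting that $f$-IRL naturally produces. The starting point is the hypothesis $Q^*_{r_{\text{IRL}},T}(s,a) = Q^*_{r_{gt},T}(s,a) - f(s)$, i.e. the two optimal $Q$-functions agree up to a state-dependent shift (this is exactly what "$r_{\text{IRL}}$ induces the same optimal policy as $r_{gt}$ on $T$" gives us, since adding an arbitrary function of the state to $Q$ leaves $\argmax_a Q(s,a)$ — and hence the soft-optimal policy — unchanged, and conversely matching soft-optimal policies forces the $Q$-values to coincide up to such a shift).

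First I would write the soft Bellman equation (equivalently the deterministic Bellman equation, since we are in the MaxEnt setting and the argument is the same) for each reward: $Q^*_{r,T}(s,a) = r(s) + \gamma\, \mathbb{E}_{s'\sim T(\cdot|s,a)}[V^*_{r,T}(s')]$, where $V^*_{r,T}(s') = \texttt{softmax}_{a'} Q^*_{r,T}(s',a')$ (or the hard max; the manipulation is identical). Plugging the hypothesis into the two Bellman equations and subtracting, the $\texttt{softmax}$ over $a'$ of a shifted $Q$ turns into $V^*_{r_{gt},T}(s') - f(s')$ — here I use that $\texttt{softmax}_{a'}\big(g(a') - c\big) = \texttt{softmax}_{a'} g(a') - c$ for a constant $c = f(s')$ independent of $a'$. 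This yields, after cancellation,
\begin{equation}
r_{\text{IRL}}(s) - f(s) = r_{gt}(s) - \gamma\, \mathbb{E}_{s'\sim T(\cdot|s,a)}[f(s')].
\end{equation}
Define $a(s) \triangleq r_{\text{IRL}}(s) - r_{gt}(s) + f(s)$; the displayed identity says $a(s) = \gamma\, \mathbb{E}_{s'\sim T(\cdot|s,a)}[f(s')]$, and crucially the left side does not depend on $a$, so for every state $s$ the quantity $\mathbb{E}_{s'\sim T(\cdot|s,a)}[f(s')]$ is the same for all actions $a$ available at $s$.

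Next I would exploit the decomposability condition to upgrade "$f$ has constant conditional expectation out of each state" to "$f$ is globally constant." Concretely: if $s_1$ and $s_2$ are $1$-step linked, there is a state $s_0$ and actions $a_1,a_2$ with $T(s_1|s_0,a_1)>0$ and $T(s_2|s_0,a_2)>0$; I would show (following AIRL's Lemma, which in turn traces to~\cite{ng1999policy}-style reasoning) that the action-independence of $\mathbb{E}_{s'}[f(s')]$ forces $f(s_1) = f(s_2)$. Transitivity of the linking relation plus the decomposability condition (all states linked) then gives $f \equiv$ const. Substituting $f \equiv$ const back into the displayed identity collapses it to $r_{\text{IRL}}(s) = r_{gt}(s) + \gamma\,\text{const} - \text{const} = r_{gt}(s) + C$ for a constant $C$. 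Finally, robustness to all dynamics is immediate: $r_{\text{IRL}}$ and $r_{gt}$ differ by a constant, and a constant shift of a state-only reward leaves the optimal (soft-)policy unchanged under any transition dynamics, so $\pi^*_{r_{\text{IRL}},T'} = \pi^*_{r_{gt},T'}$ for every $T'$, i.e. $r_{\text{IRL}}$ is disentangled in the sense of the Definition.

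\textbf{Main obstacle.} The routine parts are the two Bellman subtractions and the constant-shift invariance; the delicate step is the linking argument — carefully showing that "$\mathbb{E}_{s'\sim T(\cdot|s,a)}[f(s')]$ is independent of $a$ at every state" implies $f(s_1)=f(s_2)$ for $1$-step-linked states. The subtlety is that conditional expectations of $f$ agreeing does not pointwise pin down $f$ in general; one needs to leverage that this holds simultaneously at the shared predecessor $s_0$ for the actions reaching $s_1$ and $s_2$, and to handle the case where $T(\cdot|s_0,a)$ is not a point mass. This is precisely the technical heart borrowed from~\cite{fu2017learning}, so I would cite their lemma for this step rather than re-deriving it in full, and only sketch why their hypothesis is met here (the hypothesis being exactly the action-independence identity we derived, together with the decomposability condition assumed in the statement).
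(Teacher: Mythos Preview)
Your proposal is correct and follows the same route as the paper, whose entire proof is a citation to Theorem~5.1 of AIRL~\cite{fu2017learning}. You in fact go further than the paper by sketching the Bellman-subtraction argument and correctly isolating the decomposability/linking step as the technical crux to be imported from AIRL's lemma.
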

\begin{proof}
Refer to Theorem 5.1 of AIRL~\cite{fu2017learning}.
\end{proof}

\begin{theorem}
\label{thm:state_only_reward}
If a reward function $r'(s,a,s')$ is disentangled with respect to all dynamics functions, then it must be state-only. 
\end{theorem}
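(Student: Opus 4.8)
The plan is to prove the contrapositive: if a reward function $r'(s,a,s')$ genuinely depends on the action $a$ or on the next state $s'$, then there exist two dynamics functions under which the induced optimal policies differ, so $r'$ cannot be disentangled with respect to \emph{all} dynamics. Suppose $r'$ is not state-only, so there exist states $s$ and actions/next-states witnessing the dependence: either $r'(s,a_1,s') \neq r'(s,a_2,s')$ for some $a_1 \neq a_2$ and some reachable $s'$, or $r'(s,a,s_1') \neq r'(s,a,s_2')$ for some $s_1' \neq s_2'$. I would handle these two cases with essentially the same idea.

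First I would build a pair of dynamics $T_1, T_2$ that ``route'' the action-dependence or next-state-dependence into an observable difference in the optimal policy. For the next-state-dependence case: choose $T_1$ so that from $s$, action $a$ leads deterministically to $s_1'$ and some competing action $a'$ leads to a state with the same downstream value; pick $T_2$ identical except that action $a$ from $s$ now leads to $s_2'$. Since $r'(s,a,s_1') \neq r'(s,a,s_2')$ while the continuation values are arranged to be equal, the ranking of $a$ against $a'$ at state $s$ flips (or at least changes) between $T_1$ and $T_2$, so $\pi^*_{r',T_1}(\cdot|s) \neq \pi^*_{r',T_2}(\cdot|s)$. The action-dependence case is even more direct: construct a single dynamics where from $s$ the two actions $a_1,a_2$ lead to states of equal continuation value, so the optimal action at $s$ is determined purely by $r'(s,a_1,\cdot)$ versus $r'(s,a_2,\cdot)$; then perturb the reward-irrelevant continuation slightly — or compare against the ``symmetrized'' dynamics — to exhibit two dynamics with differing optimal policies. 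Throughout I can invoke the decomposability/reachability setup from \citet{fu2017learning} and their Theorem~5.1 machinery to ensure the constructed dynamics are admissible and that the Q-function comparisons are valid.

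The key steps in order: (1) state the contrapositive and fix witnesses to non-state-only dependence; (2) split into the action-dependence and next-state-dependence cases; (3) in each case, explicitly construct two dynamics agreeing everywhere except in how they route transitions out of the witness state $s$, engineered so that continuation values of the relevant successor states coincide; (4) write the one-step Bellman optimality comparison at $s$ and read off that the $\mathrm{argmax}$ over actions changes, hence the optimal policies differ; (5) conclude that $r'$ is not disentangled with respect to all dynamics, completing the contrapositive.

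The main obstacle I expect is step (3): making the continuation values of the two competing successor states provably equal (or controllably ordered) while keeping both dynamics legitimate and keeping the only difference between them localized to state $s$. This requires care because changing the dynamics changes the optimal value function globally; the cleanest fix is to work in a small gadget MDP (or a sub-MDP where the rest of the state space has a fixed, dynamics-independent continuation value, e.g. absorbing states with pre-set rewards) so that all downstream quantities are pinned down and only the one-step term involving $r'$ at $s$ is free to vary. A secondary subtlety is the MaxEnt/soft-optimal setting used elsewhere in the paper versus the hard-optimal $\pi^*$ in the statement — but since the theorem as stated is phrased in terms of $\pi^*_{r',T}$, I would prove it for optimal policies directly, noting the soft case follows by the same construction with the soft Bellman operator since a strict gap in one-step values survives the $\log\sum\exp$ smoothing.
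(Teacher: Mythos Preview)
The paper itself gives no proof here: it simply writes ``Refer to Theorem 5.2 of AIRL~\cite{fu2017learning}.'' So your proposal is not competing against an in-paper argument but against the AIRL construction, which is indeed of the same flavor (build gadget dynamics that expose the action/next-state dependence).

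That said, there is a genuine gap in your logical structure. Your stated contrapositive is ``if $r'$ is not state-only, then there exist two dynamics $T_1,T_2$ with $\pi^*_{r',T_1}(\cdot\,|\,s)\neq \pi^*_{r',T_2}(\cdot\,|\,s)$,'' and you conclude from this that $r'$ is not disentangled. But disentanglement (Definition~1) is \emph{not} the claim that $\pi^*_{r',T}$ is constant in $T$; it is the claim that $\pi^*_{r',T}=\pi^*_{r_{\mathrm{gt}},T}$ for every $T$. Even a state-only reward generically has different optimal policies under different dynamics, so exhibiting $T_1,T_2$ with $\pi^*_{r',T_1}\neq\pi^*_{r',T_2}$ proves nothing by itself. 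What you actually need is a single dynamics $T$ on which $\pi^*_{r',T}\neq\pi^*_{r_{\mathrm{gt}},T}$; or, if you insist on the two-dynamics device, you must additionally show that \emph{any} state-only ground-truth $r_{\mathrm{gt}}$ satisfies $\pi^*_{r_{\mathrm{gt}},T_1}(\cdot\,|\,s)=\pi^*_{r_{\mathrm{gt}},T_2}(\cdot\,|\,s)$, so that the chain $\pi^*_{r',T_1}=\pi^*_{r_{\mathrm{gt}},T_1}=\pi^*_{r_{\mathrm{gt}},T_2}=\pi^*_{r',T_2}$ yields the contradiction. Your next-state construction (same action $a$ routed to $s_1'$ versus $s_2'$) does \emph{not} have this property in general, since a state-only reward can assign different downstream values to $s_1'$ and $s_2'$.

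The fix is the one AIRL uses: for the action-dependence case, take a single dynamics in which $a_1$ and $a_2$ from $s$ lead to the \emph{same} successor; then any state-only $r_{\mathrm{gt}}$ is indifferent between $a_1,a_2$ at $s$, while $r'$ is not. For the next-state case, take a dynamics in which two actions lead deterministically to $s_1'$ and $s_2'$ respectively, and compare it to a second dynamics obtained by swapping which action leads where; a state-only reward's Q-values at $s$ are permuted identically (so the optimal \emph{policy} as a distribution is preserved under the action relabeling), whereas $r'$'s are not. Once you re-anchor the argument to $r_{\mathrm{gt}}$ in this way, your steps (3)--(5) go through and the absorbing-gadget idea you mention is exactly the right tool for pinning continuation values.
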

\begin{proof}
Refer to Theorem 5.2 of AIRL~\cite{fu2017learning}.
\end{proof}

\section{What Objective is Optimized by Previous IL Algorithms?}

\label{sec:critique}
In this section, we discuss previous IL methods and analyze which objectives they may truly optimize. Our analysis shows that AIRL and GAN-GCL methods possibly optimize for a different objective than they claim, due to their usage of biased importance sampling weights.

\ifrss
\subsubsection{\textbf{MaxEntIRL \cite{ziebart2008maximum} and Deep MaxEntIRL \cite{wulfmeier2015maximum}}}
\else
\subsection{MaxEntIRL \cite{ziebart2008maximum}, Deep MaxEntIRL \cite{wulfmeier2015maximum}, GCL~\cite{finn2016guided}}
\fi

Classical IRL methods~\cite{russell1998learning, ng2000algorithms} obtain a policy by learning a reward function from sampled trajectories of an expert policy. MaxEntIRL~\citep{ziebart2008maximum} learns a stationary reward by maximizing likelihood on expert trajectories, i.e., it minimizes forward KL divergence in trajectory space under the maximum entropy RL framework. A trajectory is a temporal collection of state-action pairs, and this makes the trajectory distribution different from state-action marginal or state marginal distribution. Each objective - minimizing divergence in trajectory space $\tau$, in state-action marginal space $(s,a)$ and state marginal $s$ are different IL methods in their own sense.

MaxEntIRL derives a surrogate objective w.r.t. reward parameter as the difference in cumulative rewards of the trajectories between the expert and the soft-optimal policy under current reward function. To train the soft-optimal policy, it requires running MaxEnt RL in an inner loop after every reward update. This algorithm has been successfully applied for predicting behaviors of taxi drivers with a linear parameterization of reward. \citet{wulfmeier2015maximum} shows that MaxEntIRL reward function can be parameterized as deep neural networks as well.

Guided cost learning (GCL)~\cite{finn2016guided} is one of the first methods to train rewards using neural network directly through experiences from real robots. They achieve this result by leveraging guided policy search for policy optimization, employing importance sampling to correct for distribution shift when the policy has not converged, and using novel regularizations in reward network. GCL optimizes for the same objective as MaxEntIRL and Deep MaxEntIRL. To summarize these three works, we have the following observation:

\begin{obs}
\label{obs:maxentirl}
MaxEntIRL, Deep MaxEntIRL, GCL all optimize for the forward KL divergence in trajectory space, i.e. $\kl{\rho_{E}(\tau)}{\rho_{\theta}(\tau)}$.
\end{obs}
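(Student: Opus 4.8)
The plan is to establish Observation \ref{obs:maxentirl} by unpacking what "minimizing forward KL in trajectory space" means and showing that the MaxEntIRL / Deep MaxEntIRL / GCL reward-update objective is precisely the gradient of that divergence with respect to the reward parameters $\theta$. First I would write the forward KL divergence between the expert trajectory distribution $\rho_E(\tau)$ and the soft-optimal trajectory distribution $\rho_\theta(\tau)$ induced by the current reward $r_\theta$:
\begin{equation}
\kl{\rho_E(\tau)}{\rho_\theta(\tau)} = \E{\tau\sim\rho_E(\tau)}{\log\rho_E(\tau)} - \E{\tau\sim\rho_E(\tau)}{\log\rho_\theta(\tau)}.
\end{equation}
The first term is constant in $\theta$, so minimizing forward KL is equivalent to maximizing $\E{\tau\sim\rho_E(\tau)}{\log\rho_\theta(\tau)}$, i.e. maximum likelihood of the expert trajectories under the MaxEnt model — which is exactly how MaxEntIRL is defined in the text. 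This identification is the conceptual heart of the observation.

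Next I would substitute the MaxEnt trajectory distribution $\rho_\theta(\tau) = \frac{1}{Z}p(\tau)e^{r_\theta(\tau)/\alpha}$ from the Preliminaries, take the gradient of the log-likelihood with respect to $\theta$, and note that $\log p(\tau)$ is $\theta$-independent so it drops out:
\begin{equation}
\nabla_\theta\, \E{\tau\sim\rho_E(\tau)}{\log\rho_\theta(\tau)} = \frac{1}{\alpha}\E{\tau\sim\rho_E(\tau)}{\nabla_\theta r_\theta(\tau)} - \nabla_\theta \log Z.
\end{equation}
Using $\nabla_\theta \log Z = \frac{1}{Z}\nabla_\theta\!\int p(\tau)e^{r_\theta(\tau)/\alpha}d\tau = \frac{1}{\alpha}\E{\tau\sim\rho_\theta(\tau)}{\nabla_\theta r_\theta(\tau)}$, this becomes $\frac{1}{\alpha}\big(\E{\rho_E}{\nabla_\theta r_\theta(\tau)} - \E{\rho_\theta}{\nabla_\theta r_\theta(\tau)}\big)$, which is exactly the "difference in cumulative rewards between the expert and the soft-optimal policy" surrogate gradient described in the text, with the soft-optimal policy obtained by running MaxEnt RL in the inner loop. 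For GCL I would additionally remark that its importance-sampling correction is an unbiased estimator of the same $Z$-gradient when the sampling policy has not yet converged, and its reward regularizers do not change the underlying objective; hence all three methods share the forward-KL-in-$\tau$ objective.

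The main obstacle — really the only subtle point — is being careful about what is held fixed versus differentiated: the partition function $Z$ depends on $\theta$, and the soft-optimal policy used to estimate $\E{\rho_\theta}{\cdot}$ is itself a function of the current $r_\theta$, so one must justify that the MaxEnt-RL inner loop indeed produces samples from $\rho_\theta(\tau)$ and that differentiating through it is unnecessary (the envelope-type cancellation). Since this is an Observation rather than a formal theorem, I would present the argument at the level of matching objectives and gradients rather than a fully rigorous measure-theoretic treatment, and cite \cite{ziebart2008maximum, wulfmeier2015maximum, finn2016guided} for the original derivations.
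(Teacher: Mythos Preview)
Your proposal is correct and follows essentially the same reasoning as the paper: the paper justifies the observation by noting that MaxEntIRL is maximum likelihood on expert trajectories (hence forward KL in $\tau$-space), that its surrogate gradient is the difference of expected cumulative reward gradients under expert versus soft-optimal trajectories, and that Deep MaxEntIRL and GCL optimize the same objective (the latter via importance sampling). Your derivation simply makes these steps explicit, including the $\nabla_\theta \log Z$ computation that the paper also writes out (in the subsequent AIRL subsection) when deriving Eq.~\eqref{eq:maxentirl}.
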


\ifrss
\subsubsection{\textbf{GAN-GCL~\cite{finn2016connection}, AIRL~\cite{fu2017learning}}}
\else
\subsection{ GAN-GCL~\cite{finn2016connection}, AIRL~\cite{fu2017learning}, EAIRL \cite{qureshi2018adversarial}}
\fi

\citet{finn2016connection} shows that GCL is equivalent to training GANs with a special structure in the discriminator (GAN-GCL). Note that this result uses an approximation in importance sampling, and hence the gradient estimator is \textit{biased}. \citet{fu2017learning} shows that GAN-GCL does not perform well in practice since its discriminator models density ratio over trajectories which leads to high variance. They propose an algorithm AIRL in which the discriminator estimates the density ratio of state-action marginal, and shows that AIRL empirically performs better than GAN-GCL. AIRL also uses approximate importance sampling in its derivation, and therefore its gradient is also \textit{biased}. GAN-GCL and AIRL claim to be able to recover a reward function due to the special structure in the discriminator. EAIRL \cite{qureshi2018adversarial} uses empowerment regularization on policy objective based on AIRL.

All the above algorithm intend to optimize for same objective as MaxEntIRL. However, there is an approximation involved in the procedure and let us analyze what that is, by going through the derivation for \textit{equivalence of AIRL to MaxEntIRL} as shown in \citet{fu2017learning} (Appendix A of that paper).

The authors start from writing down the objective for MaxEntIRL: $\text{max}_{\theta}\, L_{\mathrm{MaxEntIRL}}(\theta) = \E{\tau \sim D}{\log p_{\theta}(\tau)}$, where $D$ is the collection of expert demonstrations, and reward function is parameterized by $\theta$.

When the trajectory distribution is induced by the soft-optimal policy under reward $r_\theta$, it can be parameterized as $p_{\theta}(\tau) \propto p(s_0) \prod_{t=0}^{T-1}p(s_{t+1}|s_t,a_t)e^{r_{\theta}(s_t,a_t)}$, then its gradient is derived as follows:

\begin{equation}
    \begin{split}
        \frac{d}{d\theta}L_{\mathrm{MaxEntIRL}}(\theta) & =  \E{D}{\frac{d}{d\theta}r_{\theta}(s_t,a_t)} - \frac{d}{d\theta}\log(Z_\theta)\\
        &= \E{D}{\sum_{t=1}^T \frac{d}{d\theta}r_{\theta}(s_t,a_t)} - \E{p_{\theta}}{\sum_{t=1}^T \frac{d}{d\theta}r_{\theta}(s_t,a_t)}\\
        &= \sum_{t=1}^T \E{D}{\frac{d}{d\theta}r_{\theta}(s_t,a_t)} - \E{p_{\theta,t}}{\frac{d}{d\theta}r_{\theta}(s_t,a_t)}
    \end{split}
\end{equation}
where $Z_\theta$ is the normalizing factor of $p_\theta(\tau)$, and $p_{\theta,t}(s_t,a_t) = \int_{s_{t'!=t},a_{t'!=t}}p_{\theta}(\tau)$ denote the state action marginal at time $t$.

As it is difficult to draw samples from $p_{\theta}$, the authors instead train a separate importance sampling distribution $\mu(\tau)$. For the choice of distribution they follow \cite{finn2016guided} and use a mixture policy $\mu(a|s) = 0.5 \pi(a|s) + 0.5 \hat{q}(a|s)$ where $\hat{q}(a|s)$ is the rough density estimate trained on the demonstrations. This is justified as reducing the variance of the importance sampling distribution. Thus the new gradient becomes:
\begin{equation}
    \frac{d}{d\theta}L_{\mathrm{MaxEntIRL}}(\theta) = \sum_{t=1}^T \E{D}{\frac{d}{d\theta}r_{\theta}(s_t,a_t)} - \E{\mu_t}{\frac{p_{\theta,t}(s_t,a_t)}{\mu_t(s_t,a_t)}\frac{d}{d\theta}r_{\theta}(s_t,a_t)}
    \label{eq:maxentirl}
\end{equation}
We emphasize here $\hat{q}(a|s)$ is the density estimate trained on the demonstrations.

They additionally aim to adapt the importance sampling distribution to reduce variance by minimizing $\kl{\pi(\tau)}{p_{\theta}(\tau)}$, and this KL objective can be simplified to the following MaxEnt RL objective:

\begin{equation}
    \text{max}_{\pi}\E{\pi}{\sum_{t=1}^T r_{\theta}(s_t,a_t) - \log \pi(a_t|s_t)}
\end{equation}

This ends the derivation of gradient of MaxEntIRL. Now, AIRL tries to show that the gradient of AIRL matches the gradient for MaxEntIRL objective shown above, i.e. $\frac{d}{d\theta}L_{\mathrm{MaxEntIRL}}(\theta) = \frac{d}{d\theta}L_{\mathrm{AIRL}}(\theta)$, then AIRL is equivalent to MaxEntIRL to a constant, i.e. $L_{\mathrm{MaxEntIRL}}(\theta) = L_{\mathrm{AIRL}}(\theta) +C$.

In AIRL, the cost learning objective is replaced by training a discriminator of the following form:
\begin{equation}
    D_{\theta}(s,a) = \frac{e^{f_{\theta}(s,a)}}{e^{f_{\theta}(s,a)}+\pi(a|s)}
\end{equation}

The objective of the discriminator is to maximize the cross-entropy between the expert demonstrations and the generated samples:

\begin{equation}
    \begin{split}
        \max_\theta L_{\mathrm{AIRL}}(\theta) & = \sum_{t=1}^T \E{D}{\log D_{\theta}(s_t,a_t)} + \E{\pi_t}{\log(1-D_{\theta}(s_t,a_t))}\\
        & = \sum_{t=1}^T \E{D}{\log \frac{e^{f_\theta(s_t,a_t)}}{e^{f_\theta(s_t,a_t)}+\pi(a_t|s_t)}} + \E{\pi_t}{\log \frac{\pi(a_t|s_t)}{\pi(a_t|s_t)+e^{f_\theta(s_t,a_t)}}}\\
        & = \sum_{t=1}^T \E{D}{f_\theta(s_t,a_t)} + \E{\pi_t}{\log \pi(a_t|s_t)} -2 \E{\mu_t}{\log(\pi(a_t|s_t))+e^{f_\theta(s_t,a_t)}}
    \end{split}
\end{equation}
where $\mu_t$ is the mixture of state-action marginal from expert demonstrations and from state-action marginal induced by current policy $\pi$ at time $t$.

In AIRL, the policy $\pi$ is optimized with the following reward:
\begin{equation}
\begin{split}
    \hat{r}(s,a) &= \log(D_{\theta}(s,a)) - \log(1-D_{\theta}(s,a))\\
    &= f_\theta(s,a) - \log \pi(a|s)
\end{split}
\end{equation}

Taking the derivative with respect to $\theta$,
\begin{equation}
    \frac{d}{d\theta}L_{\mathrm{AIRL}}(\theta) = \sum_{t=1}^T \E{D}{\frac{d}{d\theta}f_\theta(s_t,a_t)} - \E{\mu_t}{\frac{e^{f_\theta(s_t,a_t)}}{ (e^{f_\theta(s_t,a_t)}+ \pi(a_t|s_t))/2} \frac{d}{d\theta}f_\theta(s_t,a_t)}
\end{equation}
The authors multiply state marginal $\pi(s_t) = \int_a \pi_t(s_t,a_t)$ to the fraction term in the second expectation, and denote that $\hat{p}_{\theta,t}(s_t,a_t)\triangleq e^{f_\theta(s_t,a_t)}\pi(s_t)$ and $\hat{\mu}_t(s_t,a_t)\triangleq (e^{f_\theta(s_t,a_t)}+\pi(a_t|s_t))\pi(s_t)/2$.

Thus the gradient of the discriminator becomes:
\begin{equation}
\label{eq:airl_is}
    \frac{d}{d\theta}L_{\mathrm{AIRL}}(\theta) = \sum_{t=1}^T \E{D}{\frac{d}{d\theta}f_{\theta}(s_t,a_t)} - \E{\mu_t}{\frac{\hat{p}_{\theta,t}(s_t,a_t)}{\hat \mu_t(s_t,a_t)}\frac{d}{d\theta}f_{\theta}(s_t,a_t)}
\end{equation} 

\subsubsection{AIRL is Not Equivalent to MaxEntIRL}

The issues occurs when AIRL tried to match Eq. \ref{eq:airl_is} with Eq. \ref{eq:maxentirl}, i.e. $\frac{d}{d\theta}L_{\mathrm{MaxEntIRL}}(\theta) \question \frac{d}{d\theta}L_{\mathrm{AIRL}}(\theta)$ with same reward parameterization $f_\theta=r_\theta$. 

If they are equivalent, then we have the importance weights equality:
\begin{equation}
\boxed{\label{eq:issue}
    \hat{p}_{\theta,t}(s_t,a_t) = p_{\theta,t}(s_t,a_t), \hat{\mu}_t(s_t,a_t)={\mu}_t(s_t,a_t)
}
\end{equation}

Then given the definitions, we have:

\begin{equation}
\begin{split}
    \hat{p}_{\theta,t}(s_t,a_t) &\triangleq e^{f_\theta(s_t,a_t)}\pi(s_t)=\pi^*_\theta(s_t)\pi^*_\theta(a_t|s_t)\triangleq p_{\theta,t}(s,a)\\
    \hat{\mu}_t(s_t,a_t) &\triangleq  (e^{f_\theta(s_t,a_t)}+\pi(a_t|s_t))\pi(s_t)/2 = (\pi(a_t|s_t)+\hat q(a_t|s_t))\pi(s_t)/2 \triangleq \mu_t(s_t,a_t)
\end{split}
\end{equation}
where $\pi_\theta^*$ is soft-optimal policy under reward $r_\theta=f_\theta$ (assumption), thus $\log \pi_\theta^*(a_t|s_t)=f_\theta(s_t,a_t)$. Then equivalently,
\begin{equation}
\begin{split}
\label{eq:all_eq}
\boxed{
    e^{f_\theta(s_t,a_t)} = \hat q(a_t|s_t) = \pi^*_\theta(a_t|s_t) = \pi(a_t|s_t)
}
\end{split}
\end{equation}

Unfortunately these equivalences hold only at the global optimum of the algorithm, when the policy $\pi$ reaches the expert policy $\pi_E\approx \hat q$ and the discriminator is also optimal. This issue also applies to GAN-GCL and EAIRL. Therefore, we have the following  conclusion:

\begin{obs}
\label{obs:airl}
GAN-GCL, AIRL, EAIRL are \textbf{not} equivalent to MaxEntIRL, i.e. \textbf{not} minimizing forward KL in trajectory space and possibly optimizing a biased objective. 
\end{obs}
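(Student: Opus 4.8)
The plan is to treat ``equivalence to MaxEntIRL'' operationally, exactly as the authors set it up: two objectives coincide up to an additive constant iff their gradients with respect to $\theta$ agree for every $\theta$ under the shared reward parameterization $f_\theta = r_\theta$. So I would place the MaxEntIRL gradient in Eq.~\ref{eq:maxentirl} beside the AIRL discriminator gradient in Eq.~\ref{eq:airl_is} and ask precisely when the two can be equal as functions of $\theta$.

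First I would match the two expressions term by term. Both are a sum over $t$ of an expert-expectation term minus an importance-weighted sampling term. Under $f_\theta = r_\theta$ the expert terms $\E{D}{\nabla_\theta r_\theta}$ and $\E{D}{\nabla_\theta f_\theta}$ are identical, so the whole comparison reduces to the second, importance-sampling terms. Equating these forces the importance weights to coincide on the sampling distribution, which is exactly the pair of conditions boxed in Eq.~\ref{eq:issue}: $\hat p_{\theta,t} = p_{\theta,t}$ and $\hat\mu_t = \mu_t$. Substituting the AIRL definitions $\hat p_{\theta,t}(s,a) \triangleq e^{f_\theta(s,a)}\pi(s)$ and $\hat\mu_t(s,a)\triangleq (e^{f_\theta(s,a)}+\pi(a|s))\pi(s)/2$ against the MaxEntIRL quantities $p_{\theta,t}(s,a)=\pi^*_\theta(s)\pi^*_\theta(a|s)$ and $\mu_t(s,a)=(\pi(a|s)+\hat q(a|s))\pi(s)/2$ collapses the requirement to the pointwise chain of equalities in Eq.~\ref{eq:all_eq}, namely $e^{f_\theta(s,a)}=\hat q(a|s)=\pi^*_\theta(a|s)=\pi(a|s)$.

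The final step is to argue that this chain cannot hold throughout optimization, and here I would separate the two failure modes. The equality $e^{f_\theta(s,a)}=\pi^*_\theta(a|s)$ equates an \emph{unnormalized} density with the normalized soft-optimal action distribution, which can hold only if the state-dependent soft value normalizer is identically one -- false for a generic learned reward. The equality $\hat q(a|s)=\pi(a|s)$ forces the \emph{fixed} expert density estimate to equal the current policy, which holds only once the policy has converged to the expert $\pi_E\approx\hat q$. Hence all four quantities coincide only at the global optimum, where the discriminator is also optimal; away from it, AIRL's correction weight $\hat p_{\theta,t}/\hat\mu_t$ differs from the true MaxEntIRL weight $p_{\theta,t}/\mu_t$, so the AIRL gradient is a \emph{biased} estimate of the MaxEntIRL gradient and the two objectives differ by more than a constant. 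I would then observe that the same argument transfers verbatim to GAN-GCL, which uses an identical importance-sampling approximation but at the trajectory level, and to EAIRL, which merely adds an empowerment regularizer on top of the same discriminator structure and inherits the same biased weights -- which together give Observation~\ref{obs:airl}.

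The hard part will be justifying that matching the two importance-weighted expectations forces the weights to agree \emph{pointwise} rather than merely in aggregate. I would handle this by noting that gradient equality must hold for all reward parameterizations, hence for arbitrarily many independent directions $\nabla_\theta f_\theta$; since the weighted integrands must then agree against an essentially arbitrary test function $\nabla_\theta f_\theta$, the weights themselves must coincide almost everywhere on the support of the sampling distribution $\mu_t$, which is precisely what Eq.~\ref{eq:issue} asserts.
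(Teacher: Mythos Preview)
Your proposal is correct and follows essentially the same route as the paper: compare the MaxEntIRL gradient (Eq.~\ref{eq:maxentirl}) with the AIRL gradient (Eq.~\ref{eq:airl_is}) under $f_\theta=r_\theta$, observe that the expert terms match so equality reduces to the importance-sampling terms, arrive at the weight conditions in Eq.~\ref{eq:issue} and hence the pointwise chain Eq.~\ref{eq:all_eq}, and conclude that this holds only at the global optimum; then note the same structure underlies GAN-GCL and EAIRL. You are in fact slightly more careful than the paper on one point: the paper simply asserts that equal expectations force equal weights, whereas you justify the pointwise conclusion by varying the reward parameterization so that $\nabla_\theta f_\theta$ acts as an arbitrary test function.
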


\subsubsection{AIRL is Not Optimizing Reverse KL in State-Action Marginal}

$f$-MAX \cite{ghasemipour2019divergence} (refer to their Appendix C) states that AIRL is equivalent to $f$-MAX with $f = -\log u$. This would imply that AIRL minimizes reverse KL in state-action marginal space.

However, there are some differences in AIRL algorithm and the $f$-MAX algorithm with reverse KL divergence. $f$-MAX\citep{ghasemipour2019divergence} considers a vanilla discriminator. This is different than the original AIRL~\cite{fu2017learning}, which uses a specially parameterized discriminator. To highlight this difference we refer to $f$-MAX with $f = -log u$ (called AIRL in their paper) as $f$-MAX-RKL in this paper, since it aims to minimize reverse-KL between  state-action marginal. We see below that using f-MAX method with special discriminator(instead of vanilla) might not correspond to reverse KL minimization in state-action marginal which shows that AIRL does not truly minimize reverse KL divergence.

To show the equivalence of AIRL to reverse KL matching objective, \citet{ghasemipour2019divergence} considers that the AIRL discriminator can be trained till convergence. With the special discriminator of AIRL, at convergence the following equality holds:
\begin{equation}
    \frac{e^{f_\theta(s,a)}}{e^{f_{\theta}(s,a)}+\pi(a|s)} \equiv \frac{\rho_{E}(s,a)}{\rho_{E}(s,a) + \rho_\theta(s,a)} \,\,\text{(at convergence)}
\end{equation}
but if this is true then $f_{\theta}(s,a)$ can no longer be interpreted as the stationary reward function as it is a function of current policy:
\begin{equation}
    f_\theta(s,a) = \frac{\rho_{E}(s,a)}{\rho_\theta(s,a)} \pi(a|s)
\end{equation}

\begin{obs}
\label{obs:airl_rkl}
AIRL is \textbf{not} optimizing reverse KL in state-action marginal space.
\end{obs}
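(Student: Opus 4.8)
The plan is to refute the equivalence asserted by $f$-MAX~\cite{ghasemipour2019divergence} by tracking what AIRL's \emph{special} discriminator converges to, and showing that the quantity it extracts as a reward cannot simultaneously be (i) the Bayes-optimal discriminator required for reverse-KL minimization and (ii) a stationary, policy-independent reward. The crucial distinction to exploit is that the $f$-MAX reverse-KL correspondence is established for a \emph{vanilla} discriminator, whereas AIRL parameterizes $D_\theta(s,a) = \frac{e^{f_\theta(s,a)}}{e^{f_\theta(s,a)}+\pi(a|s)}$ precisely so that $f_\theta$ can be read off as a reward. The argument is therefore not a computation but a demonstration that this special parameterization breaks the clean equivalence.

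First I would recall from the GAN analysis (as in Eq.~\ref{eq:disc}) that minimizing reverse KL in state-action marginal space, i.e. the $f=-\log u$ instantiation of $f$-MAX, drives the discriminator to its optimal value $\frac{\rho_E(s,a)}{\rho_E(s,a)+\rho_\theta(s,a)}$. Next I would impose this optimality condition on AIRL's special form, setting $\frac{e^{f_\theta(s,a)}}{e^{f_\theta(s,a)}+\pi(a|s)} = \frac{\rho_E(s,a)}{\rho_E(s,a)+\rho_\theta(s,a)}$ and solving the resulting identity for $f_\theta$. Cross-multiplying cancels the common $e^{f_\theta}\rho_E$ terms and yields $e^{f_\theta(s,a)}\rho_\theta(s,a) = \rho_E(s,a)\,\pi(a|s)$, hence $e^{f_\theta(s,a)} = \frac{\rho_E(s,a)}{\rho_\theta(s,a)}\,\pi(a|s)$.

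Finally I would interpret this solution: the right-hand side depends explicitly on the current policy $\pi$, both through $\pi(a|s)$ and through $\rho_\theta$, so $f_\theta$ is not a fixed function of $(s,a)$ but shifts as the policy is updated. This is exactly the property that disqualifies $f_\theta$ from being the stationary reward that AIRL's special discriminator is designed to recover; only at the global optimum, where $\rho_\theta=\rho_E$ and $\pi=\pi_E$, does the policy dependence vanish. Consequently, running AIRL with its special discriminator does not implement the reverse-KL-minimizing update in state-action marginal space that the $f$-MAX equivalence would require, which establishes the observation.

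The main obstacle is conceptual rather than computational: the algebra collapsing the two discriminator expressions is a one-line cancellation, but stating the refutation cleanly requires care about the phrase ``at convergence'' and about what is being compared -- the vanilla-discriminator $f$-MAX procedure that provably minimizes reverse KL versus the special-discriminator AIRL procedure whose extracted $f_\theta$ is entangled with $\pi$. I would emphasize that the two coincide only at the fixed point, so the claimed equivalence cannot hold along the optimization trajectory, and it is precisely this entanglement that the statement ``AIRL is not optimizing reverse KL in state-action marginal'' captures.
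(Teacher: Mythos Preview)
Your proposal is correct and follows essentially the same route as the paper: both set AIRL's special discriminator equal to the Bayes-optimal discriminator $\frac{\rho_E}{\rho_E+\rho_\theta}$, solve for $f_\theta$, and observe that the resulting expression depends on the current policy and is therefore not a stationary reward. Your cross-multiplication step giving $e^{f_\theta(s,a)} = \frac{\rho_E(s,a)}{\rho_\theta(s,a)}\,\pi(a|s)$ is in fact more carefully stated than the paper's version, which drops the exponential on the left-hand side (apparently a typo), but the argument and conclusion are identical.
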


\ifrss
\subsubsection{\textbf{FAIRL \cite{ghasemipour2019divergence}}}
\else
\subsection{GAIL~\cite{ho2016generative},
FAIRL, $f$-MAX-RKL \cite{ghasemipour2019divergence}}
\fi

Generative Adversarial Imitation Learning (GAIL)~\cite{ho2016generative} addresses the issue of running RL in an inner step by adversarial training~\cite{goodfellow2014generative}. A discriminator learns to differentiate over state-action marginal and a policy learns to maximize the rewards acquired from the discriminator in an alternating fashion. It can be further shown that the GAIL is minimizing the Jensen-Shannon divergence over state-action marginal given optimal discriminator. 

Recently the idea of minimizing the divergence between expert and policy's marginal distribution is further comprehensively studied and summarized in~\citet{ke2019imitation} and ~\citet{ghasemipour2019divergence}, where the authors show that any $f$-divergence can be minimized for imitation through $f$-GAN framework \cite{nowozin2016f}. $f$-MAX proposes several instantiations of $f$-divergence: forward KL for $f$-MAX-FKL (FAIRL), reverse KL for $f$-MAX-RKL, and Jensen-Shannon for original GAIL. Their objectives are summarized as below, where $\theta$ is policy parameter, $f^*$ is the convex conjugate of $f$ and $T_\omega$ is the discriminator.

\begin{equation}
\begin{split}
\min_\theta \fdiv{\rho_E(s,a)}{\rho_\theta(s,a)} = \min_\theta \max_\omega \E{(s,a)\sim \rho_E(s,a)}{T_\omega(s,a)} - \E{(s,a)\sim \rho_\theta(s,a)}{f^*(T_\omega(s,a))} 
\end{split}
\end{equation}

These adversarial IRL methods cannot recover a reward function because they do minimax optimization with discriminator in the inner-loop (when optimal, the discriminator predicts $1\over 2$ everywhere), and have poorer convergence guarantees opposed to using an analytical gradient.

\begin{obs}
\label{obs:gail}
GAIL, FAIRL, $f$-MAX-RKL are optimizing JS, forward KL, and reverse KL in state-action marginal space, respectively without recovering a reward function.
\end{obs}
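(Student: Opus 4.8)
The plan is to treat this observation as three instantiations of a single variational identity plus one structural argument. The common engine is the Fenchel/$f$-GAN dual of any $f$-divergence,
\begin{equation}
\fdiv{P}{Q} = \sup_{T} \E{x\sim P}{T(x)} - \E{x\sim Q}{f^*(T(x))},
\end{equation}
where $f^*$ is the convex conjugate of $f$; applied with $P=\rho_E(s,a)$ and $Q=\rho_\theta(s,a)$ this is exactly the minimax objective displayed above the statement, with $T_\omega$ playing the role of $T$. So for each of the three algorithms I would (i) identify its generator $f$, (ii) verify by direct substitution into $\fdiv{\rho_E}{\rho_\theta}=\int f(\rho_E/\rho_\theta)\rho_\theta\,ds\,da$ that this equals the claimed divergence, and (iii) check that the algorithm's discriminator loss realizes the inner $\sup_T$ of the dual. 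Finally I would argue separately that the adversarial structure precludes extracting a stationary reward.

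For FAIRL and $f$-MAX-RKL the substitution in step (ii) is immediate from Table~\ref{tab:f-div}: with $f(u)=u\log u$ we get $\int \rho_\theta \frac{\rho_E}{\rho_\theta}\log\frac{\rho_E}{\rho_\theta}=\kl{\rho_E}{\rho_\theta}$ (forward KL), and with $f(u)=-\log u$ we get $\int \rho_\theta\log\frac{\rho_\theta}{\rho_E}=\kl{\rho_\theta}{\rho_E}$ (reverse KL). For step (iii) I would compute the conjugates $f^*$, take the variational optimum $T^*=f'(\rho_E/\rho_\theta)$, and confirm that maximizing the dual over $T_\omega$ recovers the density ratio, so that the $f$-MAX discriminator loss is genuinely optimizing these KLs over the state-action marginals.

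For GAIL the extra work is reconciling its binary-cross-entropy discriminator $D_\omega$ with the raw critic $T_\omega$ of the dual. Here I would follow the classical argument: at the optimal discriminator $D^*(s,a)=\rho_E/(\rho_E+\rho_\theta)$, the GAIL objective equals $2\,\mathrm{JS}(\rho_E\,\|\,\rho_\theta)-\log 4$ over state-action marginals, and the reparameterization identifying the logit of $D_\omega$ with $T_\omega$ matches the BCE loss to the $f$-GAN dual for the JS generator $f(u)=u\log u-(1+u)\log\frac{1+u}{2}$. Matching this generator against Table~\ref{tab:f-div} closes the JS case.

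The last clause---that none of these recover a reward function---is structural rather than computational. I would observe that all three run the discriminator in the inner loop and feed only its current output back as a per-step reward, so the effective reward is re-derived every iteration and is never a fixed function of the state-action pair. Moreover, at the saddle point the discriminator is uninformative (for GAIL it predicts $\tfrac12$ everywhere, giving constant reward; for the general $f$-MAX critic it saturates at the value determined by $f^*$), so no stationary reward whose MaxEnt-optimal policy matches the expert can be read off from the converged discriminator. I expect the main obstacle to be step (iii) for GAIL: the sigmoid/BCE parameterization does not literally match the $T_\omega$ form of the dual, so the reparameterization connecting the logit of $D_\omega$ to the $f$-GAN critic---together with the bookkeeping of the additive constant $-\log 4$ and the factor of two---is the delicate part, whereas the FAIRL and RKL cases reduce to routine conjugate computations.
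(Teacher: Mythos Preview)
Your proposal is correct and follows the same logical structure as the paper, but you should be aware that the paper does not actually present a detailed proof of this statement. It is labeled an \emph{Observation} and is justified essentially by citation: the $f$-GAN variational identity (your ``common engine'') is displayed once, the specific divergences for GAIL/FAIRL/$f$-MAX-RKL are attributed to \cite{ho2016generative,ghasemipour2019divergence,ke2019imitation,nowozin2016f}, and the ``no reward'' clause is dispatched in one sentence noting that the discriminator sits in the inner loop and, at the optimum, predicts $\tfrac12$ everywhere. Your plan fills in the derivations that the paper leaves to those references---the explicit substitution of the generators from Table~\ref{tab:f-div}, the conjugate computations, and the GAIL/BCE-to-JS reconciliation with the $-\log 4$ bookkeeping---so your write-up is strictly more detailed than the paper's, but there is no methodological divergence. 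The structural argument you give for non-recovery of a stationary reward is essentially identical to the paper's.
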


\subsection{SMM~\cite{lee2019efficient}}
\citet{lee2019efficient} presents state marginal matching (SMM) for efficient exploration by minimizing reverse KL between expert and policy's state marginals (Eq \ref{eq:smm_objective}). However, their method cannot recover the stationary reward function because it uses fictitious play between policy $\pi_\theta$ and variational density $q$, and requires storing a historical average of policies and densities over previous iterations.
\begin{equation}
    \max_\theta -\kl{\rho_{\theta}(s)}{\rho_{E}(s)} = \max_\theta \E{\rho_\theta(s)}{\log \frac{\rho_E(s)}{\rho_\theta(s)}} = \max_\theta \min_q \E{\rho_\theta(s)}{\log \frac{\rho_E(s)}{q(s)}} 
 \label{eq:smm_objective}
\end{equation}

\subsection{Summary of IL/IRL Methods: Two Classes of Bilevel Optimization}

Now we generalize the related works including our method into \textit{reward-dependent} and \textit{policy-dependent} classes from the viewpoint of optimization objective.

For the \textbf{reward-dependent} (IRL) methods such as MaxEntIRL, AIRL, and our method, the objective of reward/discriminator $r_\theta$ and policy $\pi_\phi$ can be viewed as a bilevel optimization:
\begin{equation}
\begin{split}
&\min_{\theta,\phi} L(r_\theta, \pi_\phi)\\
&\mathrm{s.t.}\,\phi \in \arg \max_\phi g(r_\theta, \pi_\phi) 
\end{split}
\end{equation}
where $L(\cdot, \cdot)$ is the joint loss function of reward and policy, and $g(r, \cdot)$ is the objective of policy given reward $r$.
Thus the optimal policy is dependent on current reward, and training on the final reward does produce optimal policy, i.e. recovering the reward.

For the \textbf{policy-dependent} (IL) method such as $f$-MAX, GAIL, and SMM, the objective of reward/discriminator $r_\theta$ and policy $\pi_\phi$ can be viewed as:
\begin{equation}
\begin{split}
\max_{\phi} \min_\theta L(r_\theta, \pi_\phi)
\end{split}
\end{equation}
This is a special case of bilevel optimization, minimax game. The optimal reward is dependent on current policy as the inner objective is on reward, thus it is non-stationary and cannot guarantee to recover the reward.

\section{Implementation Details}
\label{appendix:implementation}

\subsection{Matching the Specified Expert State Density on Reacher (Sec \ref{sec:matching-expert-density})}
\label{sec:reacher_detail}
\textbf{Environment:} The OpenAI gym \texttt{Reacher-v2} environment \cite{brockman2016openai} has a robotic arm with 2 DOF on a 2D arena. The state space is 8-dimensional: sine and cosine of both joint angles, and the position and velocity of the arm fingertip in x and y direction. The action controls the torques for both joints. The lengths of two bodies are $r_1=0.1,r_2=0.11$, thus the trace space of the fingertip is an annulus with $R=r_1+r_2=0.21$ and $r=r_2-r_1=0.01$. Since $r$ is very small, it can be approximated as a disc with radius $R=0.21$. The time horizon is $T=30$. We remove the object in original reacher environment as we only focus on the fingertip trajectories.

\textbf{Expert State Density:} The domain is x-y coordinate of fingertip position. We experiment with the following expert densities:
\begin{itemize}
  \item \textit{Single Gaussian:} $\mu=(-R,0)=(-0.21,0), \sigma=0.05$.
  \item \textit{Mixture of two equally-weighted Gaussians:} $\mu_1=(-R/\sqrt{2},-R/\sqrt{2}), \mu_2=(-R/\sqrt{2},R/\sqrt{2}), \sigma_1=\sigma_2=0.05$
\end{itemize}

\textbf{Training Details:} We use SAC as the underlying RL algorithm for all compared methods. 
The policy network is a tanh squashed Gaussian, where the mean and std is parameterized by a (64, 64) ReLU MLP with two output heads. The Q-network is a (64, 64) ReLU MLP. We use Adam to optmize both the policy and the Q-network with a learning rate of 0.003. The temperature parameter $\alpha$ is fixed to be 1. The replay buffer has a size of 12000, and we use a batch size of 256.

For $f$-IRL and MaxEntIRL, the reward function is a (64, 64) ReLU MLP. We clamp the output of the network to be within the range [-10, 10]. We also use Adam to optimize the reward network with a learning rate of 0.001.

For other baselines including AIRL, $f$-MAX-RKL, GAIL, we refer to the $f$-MAX~\cite{ghasemipour2019divergence} authors' official implementation\footnote{\url{https://github.com/KamyarGh/rl_swiss}}. 
We use the default discriminator architecture as in~\cite{ghasemipour2019divergence}. In detail, first the input is
linearly embedded into a 128-dim vector. This hidden state then passes through 6 Resnet blocks of
128-dimensions; the residual path uses batch normalization and tanh activation. The last hidden
state is then linearly embedded into a single-dimensional output, which is the logits of the discriminator. The logit is clipped to be within the range $[-10,10]$. The discriminator is optimized using Adam with a learning rate of 0.0003 and a batch size of 128.

At each epoch, for all methods, we train SAC for 10 episodes using the current reward/discriminator. We warm-start SAC policy and critic networks from networks trained at previous iteration. We do not empty the replay buffer, and leverage data collected in earlier iterations for training SAC. We found this to be effective empirically, while saving lots of computation time for the bilevel optimization.

For $f$-IRL and MaxEntIRL, we update the reward for 2 gradient steps in each iteration. For AIRL, $f$-MAX-RKL and GAIL, the discriminator takes 60 gradient steps per epoch. We train all methods for 800 epochs.

$f$-IRL and MaxEntIRL require an estimation of the agent state density. We use kernel density estimation to fit the agent's density, using epanechnikov kernel with a bandwidth of 0.2 for pointmass, and a bandwidth of 0.02 for Reacher. At each epoch, we sample 1000 trajectories (30000 states) from the trained SAC to fit the kernel density model.

\textbf{Baselines:} Since we assume only access to expert density instead of expert trajectories in traditional IL framework, we use \textit{importance sampling} for the expert term in the objectives of baselines.
\begin{itemize}
    \item \textit{For MaxEntIRL}: Given the reward is only dependent on state, its reward gradient can be transformed into covariance in state marginal space using importance sampling from agent states: 
    \begin{equation}
    \label{eq:maxentirl}
    \begin{split}
    \nabla_\theta {L}_{\mathrm{MaxEntIRL}}(\theta)
    &=\frac{1}{\alpha}\sum_{t=1}^T\left( \E{s_t \sim \rho_{E,t}}{\nabla  r_\theta(s_t)} - \E{s_t \sim \rho_{\theta,t}}{\nabla  r_\theta(s_t)}\right)\\
    &= \frac{T}{\alpha} \left( \E{s \sim \rho_E}{ \nabla r_\theta(s)} - \E{s \sim \rho_\theta}{ \nabla r_\theta(s)}\right)\\
    &= \frac{T}{\alpha} \left( \E{s \sim \rho_\theta}{\frac{\rho_E(s)}{\hat \rho_\theta(s)} \nabla r_\theta(s)} - \E{s \sim \rho_\theta}{ \nabla r_\theta(s)}\right)
    \end{split}
    \end{equation}
    where $\rho_{t}(s)$ is state marginal at timestamp $t$, and $\rho(s)=\sum_{t=1}^T \rho_t(s)/T$ is state marginal averaged over all timestamps, and we fit a density model to the agent distribution as $\hat{\rho}_{\theta}$.
    
    \item \textit{For GAIL, AIRL, $f$-MAX-RKL}: 
    Original discriminator needs to be trained using expert samples, thus we use the same density model as described above, and then use importance sampling to compute the discriminator objective:
    \begin{equation}
            \max_D L(D) = \E{s\sim \rho_{\theta}}{\frac{\rho_E(s)}{\hat{\rho}_{\theta}(s)} \log D(s)} +\E{s\sim\rho_\theta}{\log(1 - D(s))}
    \end{equation}
\end{itemize}

\textbf{Evaluation:} For the approximation of both forward and reverse KL divergence, we use non-parametric Kozachenko-Leonenko estimator~\cite{kozachenko1987sample, kraskov2004estimating} with lower error \cite{singh2016analysis} compared to  plug-in estimators using density models. Suggested by \cite{ver2000non}\footnote{\url{https://github.com/gregversteeg/NPEET}}, we choose $k=3$ in $k$-nearest neighbor for Kozachenko-Leonenko estimator. Thus for each evaluation, we need to collect agent state samples and expert samples for computing the estimators.

In our experiments, before training we sample $M=10000$ expert samples and keep the valid ones within observation space. For agent, we collect $1000$ trajectories of $N=1000*T=30000$ state samples. Then we use these two batches of samples to estimate KL divergence for every epoch during training.

\subsection{Inverse Reinforcement Learning Benchmarks (Sec \ref{sec:irl-experiments})}
\label{sec:mujoco_detail}
\textbf{Environment:} We use the \texttt{Hopper-v2}, \texttt{Ant-v2}, \texttt{HalfCheetah-v2}, \texttt{Walker2d-v2} environments from OpenAI Gym.

\textbf{Expert Samples:} We use SAC to train expert policies for each environment. SAC uses the same policy and critic networks, and the learning rate as section \ref{sec:reacher_detail}. We train using a batch size of 100, a replay buffer of size 1 million, and set the temperature parameter $\alpha$ to be 0.2. The policy is trained for 1 million timesteps on Hopper, and for 3 million timesteps on the other environments. All algorithms are tested on 1, 4, and 16 trajectories collected from the expert stochastic policy.

\textbf{Training Details:} We train $f$-IRL, Behavior Cloning (BC), MaxEntIRL, AIRL, and $f$-MAX-RKL to imitate the expert using the provided expert trajectories.

We train $f$-IRL using Algorithm \ref{algo:full}. Since we have access to expert samples, we use the practical modification described in section \ref{sec:trick} for training $f$-IRL, where we feed a mixture of 10 agent and 10 expert trajectories (resampled with replacement from provided expert trajectories) into the reward objective.

SAC uses the same hyperparameters used for training expert policies.  Similar to the previous section, we warm-start the SAC policy and critic using trained networks from previous iterations, and train them for 10 episodes. 
At each iteration, we update the reward parameters once using Adam optimizer. For the reward network of $f$-IRL and MaxEntIRL, we use the same reward structure as section \ref{sec:reacher_detail} with the learning rate of $0.0001$, and $\ell_2$ weight decay of $0.001$. We take one gradient step for the reward update.

MaxEntIRL is trained in the standard manner, where the expert samples are used for estimating reward gradient.

For Behavior cloning, we use the expert state-action pairs to learn a stochastic policy that maximizes the likelihood on expert data. The policy network is same as the one used in SAC for training expert policies. 

For $f$-MAX-RKL and AIRL, we tuned the hyperparameters based on the code provided by $f$-MAX that is used for \textit{state-action} marginal matching in Mujoco benchmarks. For $f$-MAX-RKL, we fix SAC temperature $\alpha=0.2$, and tuned reward scale $c$ and gradient penalty coefficient $\lambda$ suggested by the authors, and found that $c=0.2,\lambda=4.0$ worked for $\{$Ant, Hopper, Walker2d$\}$ \textit{with} the normalization in each dimension of states and with a replay buffer of size 200000. However, for HalfCheetah, we found it only worked with $c=2.0,\lambda=2.0$ \textit{without} normalization in states and with a replay buffer of size 20000. For the other hyperparameters and training schedule, we keep them same as $f$-MAX original code: e.g. the discriminator is parameterized as a two-layer MLP of hidden size 128 with tanh activation and the output clipped within [-10,10]; the discriminator and policy are alternatively trained once for 100 iterations per 1000 environment timesteps.

For AIRL, we re-implement a version that uses SAC as the underlying RL algorithm for a fair comparison, whereas the original paper uses TRPO. Both the reward and the value model are parameterized as a two-layer MLP of hidden size 256 and use ReLU as the activation function. For SAC training, we tune the learning rates and replay buffer sizes for different environments, but find it cannot work on all environments other than HalfCheetah even after tremendous tuning. For reward and value model training, we tune the learning rate for different environments. These hyper-parameters are summarized in table~\ref{tab:airl_parameters}. We set $\alpha=1$ in SAC for all environments. For every 1000 environment steps, we alternatively train the policy and the reward/value model once, using a batch size of 100 and 256.

\begin{table}[h]
    \centering
    \vspace{1mm}
    \begin{tabular}{ccccc}
    \hline
    Hyper-parameter & Ant & Hopper & Walker & HalfCheetah \\
    \hline
    SAC learning rate  & $3e-4$ &  $1e-5$  & $1e-5$  & $3e-4$ \\
    SAC replay buffer size & 1000000 & 1000000 & 1000000 & 10000 \\
    Reward/Value model learning rate & $1e-4$ & $1e-5$ & $1e-5$  & $1e-4$ \\
    \hline
    \end{tabular}
    \vspace{1mm}
    \caption{AIRL IRL benchmarks task-specific hyper-parameters.}
    \label{tab:airl_parameters}
\end{table}

\textbf{Evaluation:} We compare the trained policies by $f$-IRL, BC, MaxEntIRL, AIRL, and $f$-MAX-RKL by computing their returns according to the ground truth return on each environment. We report the mean of their performance across 3 seeds.

For the IRL methods, $f$-IRL, MaxEntIRL, and AIRL, we also evaluate the learned reward functions. We train SAC on the learned rewards, and evaluate the performance of learned policies according to ground-truth rewards.

\subsection{Reward Prior for Downstream Hard-exploration Tasks (Sec \ref{sec:exp-downstream-tasks}.1)}
\label{sec:prior_detail}

\textbf{Environment:} The pointmass environment has 2D square state space with range $[0,6]^2$, and 2D actions that control the delta movement of the agent in each dimension. The agent starts from the bottom left corner at coordinate $(0,0)$. 

\textbf{Task Details:} We designed a hard-to-explore task for the pointmass. The grid size is $6 \times 6$, the agent is always born at $[0, 0]$, and the goal is to reach the region $[5.95, 6] \times [5.95, 6]$. The time horizon is $T=30$. The agent only receives a reward of 1 if it reaches the goal region. To make the task more difficult, we add two distraction goals: one is at $[5.95, 6] \times [0, 0.05]$, and the other at $[0, 0.05] \times [5.95, 6]$. The agent receives a reward of $0.1$ if it reaches one of these distraction goals. Vanilla SAC always converges to reaching one of the distraction goals instead of the real goal. 

\textbf{Training Details: }
We use SAC as the RL algorithm. We train SAC for 270 episodes, with a batch size of 256, a learning rate of 0.003, and a replay buffer size of 12000. To encourage the exploration of SAC, we use a random policy for the first 100 episodes.

\subsection{Reward Transfer across Changing Dynamics (Sec \ref{sec:exp-downstream-tasks}.2)}
\label{sec:transfer_detail}

\begin{figure}[h]
    \centering
    \includegraphics[width=1.\textwidth]{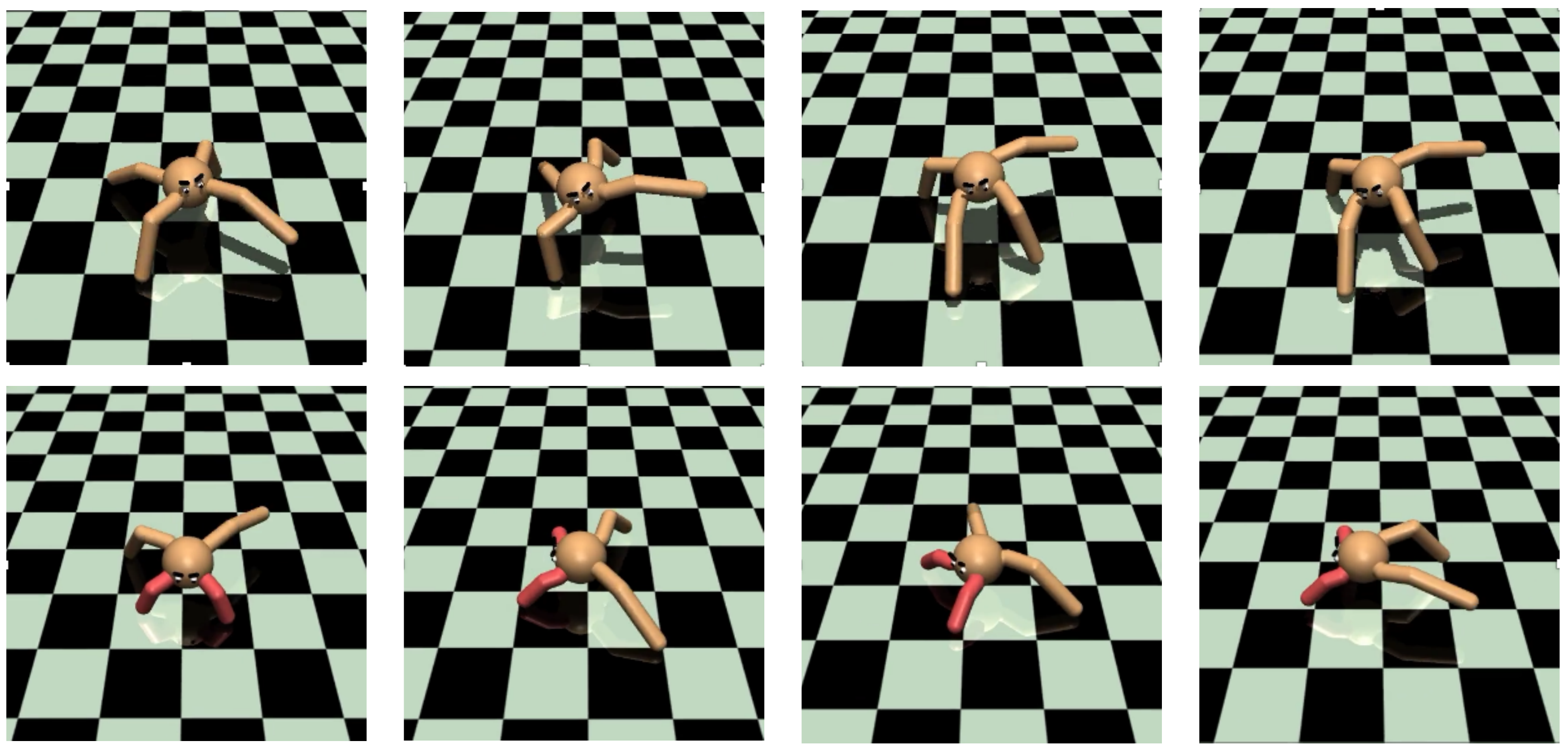}
    \caption{\textbf{Top row}: A healthy Ant executing a forward walk. \textbf{Bottom row}: A successful transfer of walking behavior to disabled Ant with 2 legs active. The disabled Ant learns to use the two disabled legs as support and crawl forward, executing a very different gait than previously seen in healthy Ant.}
    \label{fig:ant_transfer}
\end{figure}

\textbf{Environment:} In this experiment, we use Mujoco to simulate a healthy Ant, and a disabled Ant with two broken legs (Figure \ref{fig:ant_transfer}). We use the code provided by \citet{fu2017learning}. Note that this Ant environment is a slightly modified version of the \texttt{Ant-v2} available in OpenAI gym. 

\textbf{Expert Samples:} We use SAC to obtain a forward-running policy for the Ant. We use the same network structure and training parameters as section \ref{sec:mujoco_detail} for training this policy. We use 16 trajectories from this policy as expert demonstrations for the task. 

\textbf{Training Details:} We train $f$-IRL and MaxEntIRL using the same network structure and training parameters as section \ref{sec:mujoco_detail}. We also run AIRL, but couldn't match the performance reported in \citet{fu2017learning}.

\textbf{Evaluation:} We evaluate $f$-IRL and MaxEntIRL by training a policy on their learned rewards using SAC. We report the return of this policy on the disabled Ant environment according to the ground-truth reward for forward-running task. Note that we directly report results for policy transfer using GAIL, and AIRL from \citet{fu2017learning}.

\section{Additional Experiment Results}
\label{app:result}

\subsection{Inverse RL Benchmark Unnormalized Performance}
\label{sec:additional_mujoco}

In this section, we report the unnormalized return of the agent stochastic policy for ease of comparison to expert in Table \ref{tab:mujoco_unnormalized}. We analyze situations when we are provided with 1,4 and 16 expert trajectories respectively. For IL/IRL methods, all the results are averaged across three random seeds to show the mean and standard deviation in the last $10\%$ training iterations.

Note that for the row of ``Expert return", we compute the mean and std among the expert trajectories (by stochastic policy) we collected, so for one expert trajectory, it does not have std. Moreover, since we pick the best expert trajectories for training IL/IRL algorithms, the std of ``Expert return" is often lower than that of IL/IRL.

\begin{table}[h]
    \vspace{2mm}
    \centering

    \begin{tabular}{c|ccc}
        \toprule
         Method & \multicolumn{3}{c}{Hopper} \\
        \midrule
         \# Expert traj
           & 1 & 4 & 16\\
        Expert return &
    3570.87 & 3585.59 $\pm$ 12.39 & 3496.62 $\pm$ 10.13 \\
       \midrule
        BC 
           & 17.39 $\pm$ 5.99 & 468.49 $\pm$ 83.94 & 553.56 $\pm$ 46.70\\
        MaxEntIRL  
           & 3309.72 $\pm$ 171.28 & 3300.81 $\pm$ 229.84 & \textbf{3298.50} $\pm$ 255.35\\
        $f$-MAX-RKL  
           & \textbf{3349.62} $\pm$ 68.89 & 3326.83 $\pm$ 85.42 & 3165.51 $\pm$ 102.83 \\
        AIRL  & 49.12 $\pm$ 2.58 & 49.33 $\pm$ 3.93 & 48.63 $\pm$ 5.88 \\
           \midrule
        FKL ($f$-IRL)  
           & 3329.94 $\pm$ 152.33 & 3243.83 $\pm$ 312.44 & 3260.35 $\pm$ 175.58 \\
        RKL ($f$-IRL)  
           & 3276.55 $\pm$ 221.27 & 3303.44 $\pm$ 286.26 & 3250.74 $\pm$ 161.89
           \\
        JS ($f$-IRL)  
           & 3282.37 $\pm$ 202.30 & \textbf{3351.99} $\pm$ 172.70 & 3269.49 $\pm$ 160.99
           \\
        \bottomrule
    \end{tabular}
    \vspace{2mm}

    \begin{tabular}{c|ccc}
        \toprule
         Method & \multicolumn{3}{c}{Walker2d} \\
        \midrule
         \# Expert traj
           & 1 & 4 & 16\\
        Expert return &
    5468.36 & 5337.85 $\pm$ 92.46 & 5368.01 $\pm$ 78.99 \\
       \midrule

        BC 
           & -2.03 $\pm$ 1.05 & 303.24 $\pm$ 6.95 & 431.60 $\pm$ 63.68\\
         MaxEntIRL  
           & 4823.82 $\pm$ 512.58 & 4697.11  $\pm$ 852.19 & \textbf{4884.30}  $\pm$ 467.16\\
        $f$-MAX-RKL  
           & 2683.11 $\pm$ 128.14 & 2628.10 $\pm$ 548.93 & 2498.78 $\pm$ 824.26 \\
        AIRL  & 9.8 $\pm$ 1.82 & 9.24 $\pm$ 2.28 & 8.45 $\pm$ 1.56 \\
           \midrule
        FKL ($f$-IRL)  
           & \textbf{4927.02} $\pm$ 615.34 & 4809.80 $\pm$ 750.05 & 4851.81 $\pm$ 547.12 \\
        RKL ($f$-IRL)  
           & 4847.12 $\pm$ 806.61 & 4806.72 $\pm$ 433.02 & 4578.39 $\pm$ 564.17
           \\
        JS ($f$-IRL)  
           & 4888.09 $\pm$ 664.86 & \textbf{4935.42} $\pm$ 384.15 & 4725.78 $\pm$ 613.45
           \\
        \bottomrule
    \end{tabular}

    \vspace{2mm}
    \begin{tabular}{c|ccc}
        \toprule
         Method & \multicolumn{3}{c}{HalfCheetah} \\
        \midrule
         \# Expert traj
           & 1 & 4 & 16\\
        Expert return &
    12258.71 & 11944.45 $\pm$ 985.08 & 12406.29 $\pm$ 614.02 \\
       \midrule

        BC 
           & -367.56 $\pm$ 23.57 & 209.59 $\pm$ 178.71 & 287.05 $\pm$ 109.32\\
        MaxEntIRL  
           & \textbf{11637.41} $\pm$ 438.16 & 11685.92 $\pm$ 478.95 & 11228.32 $\pm$ 1752.32\\
        $f$-MAX-RKL  
           & 8688.37 $\pm$ 633.58 & 4920.66 $\pm$ 2996.15 & 8108.81 $\pm$ 1186.77 \\
        AIRL  & 2366.84 $\pm$ 175.51 & 2343.17 $\pm$ 103.51 & 2267.68 $\pm$ 83.59 \\
           \midrule
        FKL ($f$-IRL)  
           & 11556.23 $\pm$ 539.83 & 11556.51 $\pm$ 673.13 & 11642.72 $\pm$ 629.29 \\
        RKL ($f$-IRL)  
           & 11612.46 $\pm$ 703.25 & 11644.19 $\pm$ 488.79 & \textbf{11899.50} $\pm$ 605.43
           \\
        JS ($f$-IRL)  
           & 11413.47 $\pm$ 1227.89 & \textbf{11686.09} $\pm$ 748.30 & 11711.77 $\pm$ 1091.74
           \\
        \bottomrule
    \end{tabular}

    \vspace{2mm}
    \begin{tabular}{c|ccc}
        \toprule
         Method & \multicolumn{3}{c}{Ant} \\
        \midrule
         \# Expert traj
           & 1 & 4 & 16\\
        Expert return &
    5926.18 & 5859.09 $\pm$ 88.72 & 5928.87 $\pm$ 136.44 \\
       \midrule

        BC 
           & -113.60 $\pm$ 12.86 & 1321.69 $\pm$ 172.93 & 2799.34 $\pm$ 298.93 \\
        MaxEntIRL  
           & 3179.23 $\pm$ 2720.63 & 4171.28 $\pm$ 1911.67 & 4784.78 $\pm$ 482.01\\
        $f$-MAX-RKL  
           & 3585.03  $\pm$ 255.91 & 3810.56 $\pm$ 252.57 & 3653.53 $\pm$ 403.73 \\
        AIRL  & -54.7 $\pm$ 28.5 & -14.15 $\pm$ 31.65 & -49.68 $\pm$ 41.32 \\
           \midrule
        FKL ($f$-IRL)  
           & \textbf{4859.86} $\pm$ 302.94 & \textbf{4861.91} $\pm$  452.38 & \textbf{4971.11} $\pm$ 286.81 \\
        RKL ($f$-IRL)  
           & 3707.32 $\pm$ 2277.74 & 4814.58 $\pm$ 376.13 & 4813.80 $\pm$ 361.93
           \\
        JS ($f$-IRL)  
           & 4590.11 $\pm$ 1091.22 & 4745.11 $\pm$ 348.97 & 4342.39 $\pm$ 1296.93
           \\
        \bottomrule
    \end{tabular}
    \vspace{2mm}
    \caption{Benchmark of Mujoco Environment, from top to bottom, Hopper-v2, Walker2d-v2, HalfCheetah-v2, Ant-v2.}
    \label{tab:mujoco_unnormalized}
\end{table} 

\begin{figure}[h]
\vspace{2mm}
    \centering
    \begin{tabular}{ccccc}
    \multicolumn{3}{c}{\includegraphics[width=.5\columnwidth]{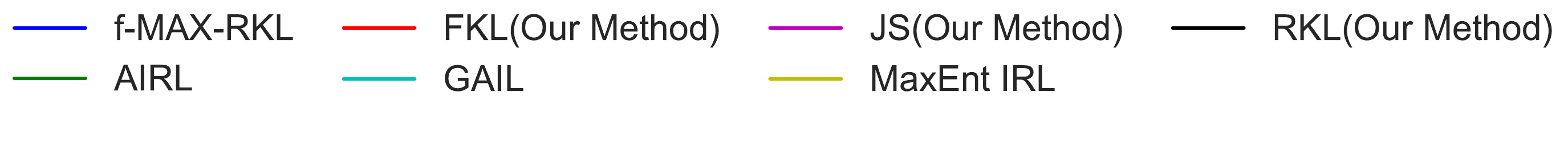}} & FKL & RKL  \\
    \multicolumn{3}{c}{\includegraphics[width=.5\columnwidth]{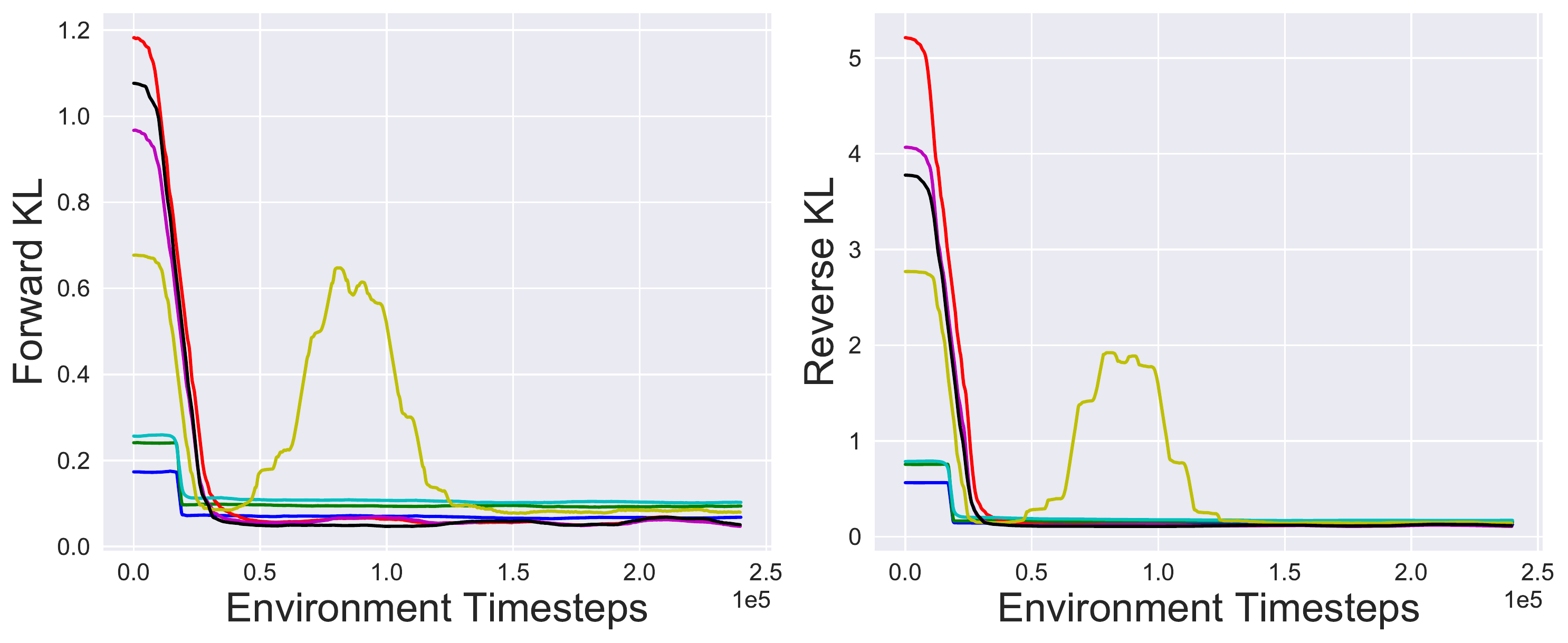}} & \includegraphics[width=0.22\columnwidth]{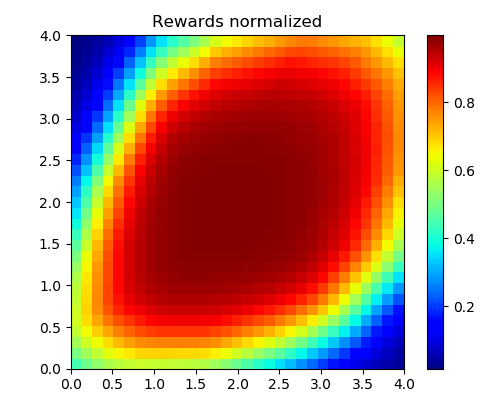} & \includegraphics[width=0.22\columnwidth]{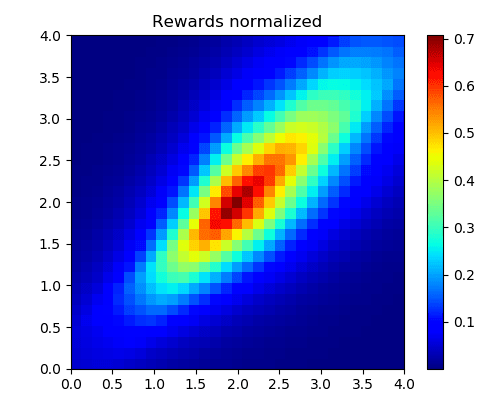} \\
    \end{tabular}
 \caption{Left: Forward and Reverse KL curves in pointmass environment for Gaussian target density for all the methods during training. Smoothed in a window of 120 evaluations. Right: Learned rewards by $f$-IRL by optimizing for Forward KL (left) and Reverse KL (right) objective in pointmass goal-reaching task.}
    \label{fig:didactic}
\end{figure}
\subsection{Additional Result of Reward Transfer across Changing Dynamics}

\begin{wraptable}{R}{0.65\linewidth}
    \centering
    \footnotesize
    \begin{tabular}{cccc|c}
        \toprule
       Policy Transfer  & AIRL &
        MaxEntIRL &$f$-IRL & Ground-truth\\
        using GAIL &
         & & & Reward\\
        \midrule
         -29.9 & 130.3 & 145.5& \textbf{232.5} & 315.5\\ 
        \bottomrule
    \end{tabular}
    \caption{Returns obtained after transferring the policy/reward on modified Ant environment using different IL methods. In this case, we report the performance of best seed with a maximum of 50 expert trajectories.}
    \label{tab:policy-transfer_new}
    \vspace{1mm}
\end{wraptable}

In section \ref{sec:exp-downstream-tasks} (``Reward transfer across changing dynamics") we show the result of the setting with 32 expert trajectories provided. We follow AIRL paper~\cite{fu2017learning} setting for this experiment, but the number of experiment trajectories used in their experiment is \textit{unknown}. We use a maximum of 50 expert trajectories and show the best seed performance in Table \ref{tab:policy-transfer_new}. Note that this table has same values as Table~\ref{tab:policy-transfer} except for our method. We see that with more expert trajectories $f$-IRL is able to outperform baselines with a large margin. The disabled Ant agent is able to learn a behavior to walk while taking support from both of its disabled legs.

\subsection{Matching the Specified Expert State Density on PointMass}

We also conducted the experiment described in section \ref{sec:reacher_detail} on the pointmass environment similar to that in section \ref{sec:prior_detail}. This environment has size $[4,4]$, and the target density is a unimodal Gaussian with $\mu=(2,2), \sigma=0.5$ for goal-reaching task.

This experiment is didactic in purpose. 
In the left of Figure \ref{fig:didactic}, we observe that all methods converge (MaxEntIRL is slightly unstable) and are able to reduce the FKL and RKL to near zero.

In the right of Figure \ref{fig:didactic}, we observe that rewards learned by $f$-IRL using Forward KL and Reverse KL divergence objective demonstrate the expected \textit{mode-covering} and \textit{mode-seeking} behavior, respectively.

\clearpage

\section*{Generalization to Real Robots}
\label{app:real_robot}

\subsection*{$f$-IRL achieves better simulation results than prior work which has been demonstrated to work on real robots.} \citet{finn2016guided} showed that GCL, which is a variant of MaxEnt IRL with importance sampling correction, can work well on real robots.  As shown in Figure 3 and Table 3, $f$-IRL achieves similar or higher final policy performances, and has slightly better sample-efficiency (in terms of environment interaction steps) compared with MaxEntIRL, in all four Mujoco simulated locomotion tasks. Therefore, we believe that $f$-IRL should be no harder to run on physical robots.

\subsection*{$f$-IRL is more sample-efficient than prior IL/IRL methods.}
It is critical for IL/IRL methods to have a good sample-efficiency, in terms of both number of expert samples and number of environment interaction steps, when applied to real robots. In practise, it is difficult to collect large number of expert trajectories for certain tasks. Moreover, larger environment interactions steps slow down the real-robot experiment (as robots move much slower in real world than in simulator) and increase the chances of tear/damage to the robot. As shown in Figure 3 and Table 3, $f$-IRL has better sample efficiency in both expert samples and environment steps when compared with prior work. Therefore, we believe it would be much easier to apply $f$-IRL to real robots compared with these prior works.

\subsection*{$f$-IRL requires less hyperparameter-tuning for each task.}

Tuning hyperparameters in the real world is quite arduous, so a method that works for a wide range of hyperparameter settings would be ideal.
As shown in appendix \ref{appendix:implementation} on implementation details, $f$-IRL uses the same set of hyperparameters for all four Mujoco benchmarks (one exception is that we increase the reward network size for Ant due to its large state space), and we did not spend much time in tuning the hyperparameters. 

On the contrary, in our re-implementation, we find prior methods such as AIRL~\cite{fu2017learning} and the AIL methods~\cite{ho2016generative, ghasemipour2019divergence} extremely sensitive to hyperparameters. For example, we find that $f$-MAX works only with observation normalization in Ant, Hopper and Walker and only without observation normalization in HalfCheetah. $f$-MAX also requires tedious hyperparameter grid search on reward scale and gradient penalty, as mentioned in their appendix~\cite{ghasemipour2019divergence} and verified in our re-implementation. We believe that the ease of tuning of $f$-IRL will make it much easier to be applied to real robots compared with these prior works.

\end{document}